\documentclass{article}

\usepackage{arxiv}

\usepackage[utf8]{inputenc}
\usepackage[T1]{fontenc}
\usepackage{hyperref}
\usepackage{url}
\usepackage{booktabs}
\usepackage{amsfonts}
\usepackage{amsmath}
\usepackage{amssymb}
\usepackage{nicefrac}
\usepackage{microtype}
\usepackage{graphicx}
\usepackage{subcaption}
\usepackage{xcolor}
\usepackage{algorithm}
\usepackage{algorithmic}
\usepackage{multirow}
\usepackage{makecell}
\usepackage{stmaryrd}
\usepackage{amsthm}
\usepackage{enumitem}
\usepackage{float}
\usepackage{tikz}
\usetikzlibrary{arrows.meta,positioning,shapes.geometric,fit}
\usepackage[numbers,sort&compress]{natbib}

\newtheorem{theorem}{Theorem}
\newtheorem{proposition}[theorem]{Proposition}
\newtheorem{corollary}[theorem]{Corollary}
\newtheorem{definition}[theorem]{Definition}

\newcommand{\sem}[1]{\llbracket #1 \rrbracket}

\renewcommand{\shorttitle}{The Neural Compiler}

\title{\textbf{The Neural Compiler:} Program-to-Network Translation\\for Hybrid Scientific Machine Learning}

\author{
  \textbf{Lucas Sheneman} \\
  Institute for Interdisciplinary Data Sciences \\
  University of Idaho \\
  \texttt{sheneman@uidaho.edu}
}

\date{}

\begin{document}

\maketitle

\begin{abstract}
Scientific machine learning frequently requires integrating known physics with unknown components: the equation form is known, but parameters or correction terms must be learned from data. Current approaches either discard the known structure entirely (neural networks), encode it as a soft penalty that the model may violate (physics-informed neural networks), or require hand-implementation for each new equation (manual PyTorch coding). We present \emph{The Neural Compiler}, a system that translates programs in a first-order expression language with Scheme syntax into frozen, differentiable PyTorch modules. These compiled modules compute \emph{exactly}: they produce the same output as the source program to floating-point precision, with exact gradients via autograd. The compiled component contributes zero approximation error on its safe domain, including outside the training distribution; in hybrid architectures, approximation error arises only from the learned components, while the compiled module encodes known physics exactly.

We evaluate the compiler across six experiments spanning algebraic equations (15 Feynman physics laws), ordinary differential equations (Lotka-Volterra, damped pendulum), a partial differential equation (1D heat equation), compositional generalization, and 3D vector mechanics. Against four baseline types (hand-coded PyTorch (same equations without the compiler), physics-informed neural networks (PINNs), neural ODEs, and pure MLPs) we find that: (1) compiled and hand-coded models produce \emph{numerically identical} results for single equations, confirming zero numerical discrepancy relative to hand-coded PyTorch; (2) compiled models recover physical constants to $<$1\% error with 1--4 trainable parameters where PINNs with 8,500+ parameters show 7--93\% error; (3) compiled modules compose with zero error at arbitrary depth while neural approximations accumulate errors up to $5.9 \times 10^9$ in high-depth chains; and (4) the compiler's value is \emph{systematic composability}: generating correct, differentiable modules from symbolic specifications rather than hand-coding each equation. The system supports 51 primitive operations including vector and matrix algebra, enabling PDE discretizations. We discuss how large language models can serve as a natural-language front end, translating physics descriptions into compilable programs and opening a path toward self-architecting scientific models.
\end{abstract}

\section{Introduction}
\label{sec:intro}

A central tension in scientific machine learning is the gap between what we \emph{know} and what we can \emph{encode}. Researchers frequently possess partial domain knowledge: the gravitational force follows an inverse-square law, but the drag coefficient is unknown; the heat equation governs diffusion, but the thermal diffusivity must be learned from data; the predator-prey dynamics follow Lotka-Volterra equations, but the rate constants are uncertain. In each case, the scientist holds a \emph{program}, a symbolic expression describing the known physics, and needs to integrate it with a trainable model that learns the rest.

This integration is surprisingly difficult. The dominant paradigm in scientific ML offers three unsatisfying options:

\paragraph{Option 1: Discard the program.} Train a neural network from scratch on input-output data. Neural ODEs~\cite{chen2018neural} learn the entire ODE right-hand side; standard MLPs learn arbitrary function mappings. This approach is general but wasteful: it must rediscover structure that the scientist already knows. A 3-layer MLP with 8,700 parameters learning the pendulum equation $\ddot{\theta} = -(g/L)\sin\theta - b\dot\theta$ must approximate both the $\sin$ function and the parametric dependence from data alone. It uses 4,000$\times$ more parameters than needed, fails on extrapolation, and its learned weights are uninterpretable.

\begin{sloppypar}
\paragraph{Option 2: Encode physics as a soft penalty.}
Physics-informed neural networks (PINNs)~\cite{raissi2019physics} add the governing equation as a regularization loss: the network is penalized for violating the physics but free to do so if the data loss decreases. This ``soft constraint'' approach is elegant in principle but fragile in practice. The balance between data and physics losses requires careful tuning. More fundamentally, soft constraints provide no guarantee: the model can, and often does, violate the physics, especially when extrapolating. In our experiments, a PINN trained on heat equation data recovers the thermal diffusivity~$\alpha$ with 93\% error, while a hard-constrained compiled model achieves 0.0\% error (Section~\ref{sec:pde}).
\end{sloppypar}

\paragraph{Option 3: Hand-code each equation.} Write the physics directly as a PyTorch \texttt{nn.Module} with trainable \texttt{nn.Parameter} entries. This produces exact computation with exact gradients, but it requires manual reimplementation for every new equation, is error-prone for complex expressions, and, critically, does not compose. A library of 15 hand-coded Feynman equations cannot be automatically chained, recombined, or extended without writing new code for each combination.

\paragraph{The compiler alternative.} We introduce a fourth option: \emph{compile} the known physics from a symbolic specification into a differentiable module, automatically. Given a program in a first-order expression language with Scheme syntax (e.g., \texttt{(* (- 0 (/ g\_L 1)) (sin theta))}) for pendulum gravity, the Neural Compiler produces a frozen PyTorch \texttt{nn.Module} that:

\begin{enumerate}[leftmargin=2em,topsep=0pt,itemsep=2pt]
    \item \textbf{Computes exactly.} The module produces the same output as the source program to floating-point precision for any input in the safe domain (Theorem~\ref{thm:correctness}).
    \item \textbf{Provides exact gradients.} Autograd through the compiled module yields the true derivative of the program, enabling gradient-based learning of exposed parameters (Theorem~\ref{thm:gradients}).
    \item \textbf{Exact on the safe domain.} The compiled component contributes zero approximation error at any input in its safe domain, including outside the training distribution; all approximation error comes from the learned components (Theorem~\ref{thm:extrapolation}).
    \item \textbf{Composes exactly.} Chains of compiled modules maintain zero error at arbitrary depth, while neural approximation errors can be amplified multiplicatively by downstream Lipschitz constants (Proposition~\ref{prop:composition}).
\end{enumerate}

For a \emph{single} equation, the compiler produces output identical to a hand-coded PyTorch implementation; our experiments confirm this to machine precision across all six experiment domains (Tables~\ref{tab:feynman}--\ref{tab:pde}). The compiler's value is not numerical superiority over hand-coding, but \emph{systematic composability}: it provides a uniform mechanism to translate any symbolic expression into a correct, differentiable, composable module. A library of compiled equations can be chained, recombined, and extended by changing text strings rather than writing new code. This is the foundation for programmatic physics-module generation, including by large language models (Section~\ref{sec:future}).\looseness=-1

\paragraph{Contributions.}
\begin{enumerate}[leftmargin=2em,topsep=0pt,itemsep=2pt]
    \item A compiler from first-order arithmetic expressions (with Scheme syntax) to frozen, differentiable PyTorch modules, supporting 51 primitive operations including vector and matrix algebra.
    \item Formal guarantees of compilation correctness, gradient exactness, compiled-component exactness, and composition error bounds.
    \item Hybrid architecture patterns (known structure with unknown parameters, known terms plus learned corrections, and compositional pipelines) demonstrated on six experiment domains.
    \item Systematic comparison against four baseline types (hand-coded PyTorch, PINNs, neural ODEs, pure MLPs) establishing that hard-constraint compilation provides substantially better parameter recovery and extrapolation than soft-constraint approaches, especially for transcendental, PDE, and compositional settings.
\end{enumerate}

\paragraph{Scope.} For any single equation, the compiled module produces output numerically identical to a hand-coded PyTorch implementation. If a practitioner has one equation and is willing to implement it manually, the compiler offers no accuracy benefit. The compiler's value is systematic generation, composability, and the ability to serve as a target for LLM-driven scientific model construction (Section~\ref{sec:future}).

\section{The Neural Compiler}
\label{sec:compiler}

\subsection{Source Language}

The source language $\mathcal{L}$ is a first-order, pure arithmetic expression language with Scheme syntax; programs are total on their safe domain (Definition~\ref{def:safe}), where partial operations (division, log, sqrt) are well-defined. Programs are built from constants $c \in \mathbb{R}$, input variables, let-bindings, conditionals, and 51 primitive operations organized in four categories:

\begin{itemize}[leftmargin=2em,topsep=2pt,itemsep=1pt]
    \item \textbf{Scalar arithmetic} (24 ops): $+, -, \times, \div$, pow, modulo, remainder, abs, min, max, sin, cos, exp, sqrt, log, comparisons ($=$, $<$, $>$, $\leq$, $\geq$), logic (and, or, not), conditional (if).
    \item \textbf{Vector operations} (9 ops): vec (construction), ref (indexing), dot, cross, norm, normalize, vsum, vlen, scale.
    \item \textbf{Matrix operations} (11 ops): mat (construction), matmul, matvec, transpose, trace, det, inv, outer, eye, zeros, ones.
    \item \textbf{Control flow} (7 ops): let-bindings, tail-recursive loops (loop/recur), and general recursion (letrec/call). Tail recursion is optimized to iterative loops during compilation; general recursion is evaluated via stack-based dispatch with a configurable depth limit.
\end{itemize}

$\mathcal{L}$ does not include higher-order functions, mutation, or I/O. Bracket syntax \texttt{[1 2 3]} desugars to \texttt{(vec 1 2 3)} for concise vector literals.

\subsection{Compilation Pipeline}
\label{sec:pipeline}

The compiler transforms source code through four stages. Figure~\ref{fig:overview} summarizes the pipeline and its use inside a hybrid scientific machine learning architecture.

\begin{figure}[!htbp]
\centering
\begin{tikzpicture}[
    box/.style={rectangle, draw, rounded corners, minimum height=0.9cm, minimum width=2.0cm, align=center, font=\small},
    arrow/.style={-{Stealth[length=2.5mm]}, thick},
    lbl/.style={font=\scriptsize, midway, above, yshift=2pt},
    scale=0.92, transform shape
]
    \node[font=\footnotesize\bfseries] at (6.5,0) (piptitle) {Compilation Pipeline (one-time, $<$150\,$\mu$s)};
    \node[box, fill=blue!10] at (0,-1) (source) {Scheme\\Source};
    \node[box, fill=blue!10] at (3.2,-1) (ast) {AST};
    \node[box, fill=blue!10] at (6.2,-1) (anf) {ANF};
    \node[box, fill=blue!10] at (9.2,-1) (graph) {Compute\\Graph};
    \node[box, fill=green!15] at (12.5,-1) (dm) {\texttt{DirectModule}};

    \draw[arrow] (source) -- (ast) node[lbl] {parse};
    \draw[arrow] (ast) -- (anf) node[lbl] {flatten};
    \draw[arrow] (anf) -- (graph) node[lbl] {build};
    \draw[arrow] (graph) -- (dm) node[lbl] {compile};


    \node[font=\small] at (0.5,-3.8) (input) {Input $x$};

    \coordinate (fork) at (2.2,-3.8);

    \node[box, fill=green!15] at (5.0,-3.1) (frozen) {Frozen\\Module};
    \node[box, fill=orange!15] at (5.0,-4.5) (mlp) {Trainable\\MLP};

    \draw[arrow] (input.east) -- (fork);
    \draw[arrow] (fork) |- (frozen.west);
    \draw[arrow] (fork) |- (mlp.west);

    \node[circle, draw, inner sep=2pt, font=\small\bfseries] at (8.8,-3.8) (plus) {$+$};

    \draw[arrow] (frozen.east) -| (plus.north)
        node[pos=0.25, above, font=\scriptsize] {known physics};
    \draw[arrow] (mlp.east) -| (plus.south)
        node[pos=0.25, below, font=\scriptsize] {learned residual};

    \node[box, fill=yellow!15] at (11.5,-3.8) (output) {Hybrid\\Output};
    \draw[arrow] (plus) -- (output);

    \draw[arrow, dashed, gray] (dm.south) -- ++(0,-0.7) -| (frozen.north)
        node[pos=0.25, above, font=\scriptsize, text=black] {freeze};

    \node[font=\footnotesize\bfseries] at (6.5,-5.5) {Hybrid Architecture (training)};
\end{tikzpicture}
\caption{System overview. \textbf{Top}: The compilation pipeline transforms Scheme source through AST and ANF representations into a frozen \texttt{DirectModule} in under 150\,$\mu$s. \textbf{Bottom}: During training, the frozen module computes the known physics component exactly, while a trainable MLP learns a residual correction. Gradients flow through both paths to train the residual model while the compiled physics module remains fixed (Corollary~\ref{cor:gradient_flow}).}
\label{fig:overview}
\end{figure}

\paragraph{1. Parse $\to$ AST.} Recursive-descent parsing of S-expressions into an abstract syntax tree. Bracket syntax \texttt{[...]} is desugared to \texttt{(vec ...)} during tokenization.

\paragraph{2. ANF Transform.} The AST is converted to A-Normal Form~\cite{flanagan1993essence}, where all operation arguments are trivial (constants or variables). Compound subexpressions are let-bound to fresh temporaries. This flattening creates a one-to-one correspondence between let-bindings and computational nodes.

\paragraph{3. Tail-Call Optimization.} Tail-recursive functions are transformed to iterative loops. This is standard TCO but critical for scientific programs: many iterative algorithms (time-stepping, Newton's method, fixed-point iteration) are naturally expressed as tail recursion in Scheme but must execute iteratively for efficiency.

\paragraph{4. Graph Construction $\to$ DirectModule.} Each ANF let-binding becomes a node in a directed acyclic graph (\texttt{ComputeGraph}). Edges flow from producers to consumers. The graph is compiled into a \texttt{DirectModule}, a PyTorch \texttt{nn.Module} that evaluates nodes in topological order using instruction dispatch. Constants become fixed values; input variables are set at evaluation time; primitive operations dispatch to native PyTorch implementations. For batched evaluation, the same instruction sequence processes all batch elements in parallel via tensor broadcasting. Figure~\ref{fig:pipeline} gives a concrete example of the parse, flatten, graph-construction, and instruction-sequence lowering process.

\begin{figure}[!htbp]
\centering
\small
\setlength{\fboxsep}{6pt}
\renewcommand{\arraystretch}{1.15}

\begin{tabular}{@{}c@{}c@{}c@{}c@{}c@{}}
%
\fbox{\parbox{2.8cm}{\centering\textbf{Scheme Source}\\\rule{2.6cm}{0.4pt}\\[3pt]
\ttfamily\footnotesize
(\textrm{*}\; (\textrm{+}\; x\; 1)\\
\;\;\; (\textrm{-}\; y\; 2))}}
&
\;\parbox{1.8cm}{\centering\footnotesize $\xrightarrow{\;\textit{parse + flatten}\;}$}\;
&
\fbox{\parbox{4.2cm}{\centering\textbf{ANF}\\\rule{4.0cm}{0.4pt}\\[3pt]
\ttfamily\footnotesize
(let ((\textunderscore\textunderscore t0\; (+\; x\; 1)))\\
\;\;(let ((\textunderscore\textunderscore t1\; (-\; y\; 2)))\\
\;\;\;\;(*\; \textunderscore\textunderscore t0\; \textunderscore\textunderscore t1)))}}
&
\;\parbox{1.0cm}{\centering\footnotesize $\xrightarrow{\;\textit{lower}\;}$}\;
&
\fbox{\parbox{4.0cm}{\centering\textbf{Compute Graph}\\\rule{3.8cm}{0.4pt}\\[3pt]
\ttfamily\footnotesize
\begin{tabular}[t]{@{}l@{}}
const:1 $\rightarrow$ {[}+{]}\\
input:x $\rightarrow$ {[}+{]}\\
{[}+{]} $\rightarrow$ {[}*{]}\\[2pt]
input:y $\rightarrow$ {[}-{]}\\
const:2 $\rightarrow$ {[}-{]}\\
{[}-{]} $\rightarrow$ {[}*{]}\\
\end{tabular}}}
\end{tabular}

\vspace{6pt}
\hfill$\big\downarrow$\;\;\textit{compile}\hspace{1.8cm}\mbox{}
\vspace{4pt}

\fbox{\parbox{0.85\columnwidth}{\centering\textbf{Instruction Sequence (\texttt{DirectModule})}\\[2pt]\rule{0.82\columnwidth}{0.4pt}\\[4pt]
\footnotesize
\begin{tabular}{@{}r@{\;\;=\;\;}l@{\qquad}l@{}}
\multicolumn{2}{@{}l}{\textit{\quad\;\; instruction}} & \textit{role} \\[2pt]
\texttt{slot[0]} & \texttt{1.0} & constant \\
\texttt{slot[1]} & \texttt{x} & input \\
\texttt{slot[2]} & \texttt{slot[1] + slot[0]} & \texttt{\_\_t0} \\
\texttt{slot[3]} & \texttt{y} & input \\
\texttt{slot[4]} & \texttt{2.0} & constant \\
\texttt{slot[5]} & \texttt{slot[3] - slot[4]} & \texttt{\_\_t1} \\
\texttt{slot[6]} & \texttt{slot[2] * slot[5]} & output \\
\end{tabular}
}}

\caption{Compilation pipeline. A Scheme expression is parsed, flattened into ANF, lowered to a compute graph, and compiled into a topologically ordered instruction sequence. The resulting \texttt{DirectModule} evaluates the sequence for both scalar and batched tensor inputs.}
\label{fig:pipeline}
\end{figure}

\subsection{Theoretical Guarantees}
\label{sec:theory}

\begin{definition}[Safe Domain]
\label{def:safe}
The safe domain $\mathcal{D}_P$ of a program $P$ is the set of inputs $x$ such that no intermediate computation encounters division by zero, square root of a negative value, or logarithm of a non-positive value. On $\mathcal{D}_P$, all primitive implementations are defined and match the corresponding PyTorch floating-point operations.
\end{definition}

\begin{theorem}[Compilation Correctness]
\label{thm:correctness}
For any program $P \in \mathcal{L}$ and input $x \in \mathcal{D}_P$, the compiled DirectModule $M$ satisfies $M(x) = \sem{P}(x)$ under the floating-point semantics of the corresponding PyTorch primitives.
\end{theorem}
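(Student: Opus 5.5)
The plan is to prove Theorem~\ref{thm:correctness} by structural induction, splitting the pipeline into its four stages and showing that each one preserves the denotation $\sem{\cdot}$ on $\mathcal{D}_P$. Write $P \mapsto A = \mathrm{ANF}(\mathrm{parse}(P)) \mapsto G \mapsto M$. The goal then becomes a chain of equalities $\sem{P}(x) = \sem{A}(x) = \mathrm{eval}_G(x) = M(x)$. The reference semantics $\sem{P}$ is fixed as the standard environment-passing interpreter, with each primitive $op$ interpreted as its PyTorch floating-point implementation $op^{\mathrm{pt}}$. Definition~\ref{def:safe} guarantees these implementations are defined and agree on every intermediate value reached from $x \in \mathcal{D}_P$. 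This choice matters: it makes the theorem a statement about \emph{evaluation order and plumbing}, not about real-number arithmetic. So no rounding argument is needed, only the fact that the same floating-point operations are applied to the same floating-point operands.

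\textbf{Step 1 (parsing and desugaring).} Parsing is taken as the definition of the abstract syntax. Bracket desugaring $[e_1\ldots e_n] \to (\texttt{vec}\ e_1\ldots e_n)$ preserves meaning by definition of \texttt{vec}.

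\textbf{Step 2 (ANF).} I prove a lemma by induction on expressions. For any environment $\rho$ and fresh-name supply, if $\mathrm{ANF}(e) = \texttt{let}\ t_1 = a_1; \ldots; t_k = a_k\ \texttt{in}\ v$, then
\[
\sem{e}\rho = \sem{v}\rho[t_1 \mapsto \sem{a_1}\rho_0, \ldots].
\]
The cases are as follows.
\begin{itemize}[leftmargin=2em,topsep=2pt,itemsep=1pt]
\item Constants and variables are immediate.
\item A primitive application $(op\ e_1 \ldots e_n)$ uses the induction hypothesis on each $e_i$. Freshness ensures that the temporaries do not capture or shadow user variables.
\item User \texttt{let} follows from the substitution property.
\item \texttt{if} is handled by noting that the language is pure and, on $\mathcal{D}_P$, total. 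Hoisting both branches' computations therefore cannot change the value selected. The safe-domain assumption must cover both branches if the compiler evaluates them eagerly, as a tensor \texttt{where} would. I will make this explicit as a reading of Definition~\ref{def:safe}.
\end{itemize}

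\textbf{Step 3 (TCO and recursion).} For \texttt{loop/recur} and tail-recursive functions, I show that the iterative loop and the recursive unfolding produce the same sequence of argument tuples. The proof is by induction on the number of iterations, which is finite because programs are total on $\mathcal{D}_P$. General \texttt{letrec/call} via stack dispatch is argued the same way, conditional on the recursion depth staying below the configured limit. I would add that depth hypothesis to the statement.

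\textbf{Step 4 (graph construction and DirectModule).} Each let-binding becomes a node whose incoming edges are exactly the variables it references. The result is acyclic because ANF bindings reference only earlier names. Executing the nodes in any topological order assigns each slot the value of its binding, by induction along that order: when a node is executed, all its operands' slots already hold the correct values, and its dispatch applies $op^{\mathrm{pt}}$. The output slot then equals $\sem{v}$, which gives $M(x) = \sem{A}(x)$. For batched inputs, each primitive acts elementwise or broadcasts along the batch dimension, so the argument applies per batch element. Reductions such as \texttt{dot}, \texttt{norm} and \texttt{det} must be restricted to non-batch axes. I will state this as a per-primitive assumption on the dispatch table rather than verify all 51 primitives individually.

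\textbf{Main obstacle.} I expect the delicate step to be the interaction of conditionals and recursion with ANF hoisting and eager tensor evaluation. If both branches of an \texttt{if} are computed, an untaken branch may leave the safe domain and produce \texttt{NaN}. In a masked select the wrong branch is discarded, so the forward value is still correct; it is gradients that can be poisoned, which is relevant later but not here. My first attempt is to define $\mathcal{D}_P$ so that it quantifies over all evaluated intermediates, including untaken branches. That makes the forward equality go through cleanly. Failing that, I would argue that select semantics discard the untaken value exactly, so floating-point equality still holds.
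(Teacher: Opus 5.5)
Your proposal is correct and follows essentially the same route as the paper's proof sketch. Both argue by structural induction over the ANF form: ANF preserves meaning because it only introduces let-bindings, topological order guarantees each primitive's trivial operands already sit in earlier slots, each instruction dispatches to a PyTorch implementation $\hat f$ that agrees with $f$ on $\mathcal{D}_P$, and loops are handled by induction over iterations. Your stage-by-stage decomposition, and your explicit treatment of eagerly evaluated \texttt{if} branches, the recursion-depth limit, and batching, only make explicit hypotheses that the paper leaves unstated or folds into Definition~\ref{def:safe}; they do not change the argument.
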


\begin{proof}[Proof sketch]
By structural induction on the ANF representation. The ANF transform preserves semantics by introducing let-bindings only. For constants and variables (base cases), the instruction sequence sets slot values directly. For primitive applications $f(a_1, \ldots, a_k)$, ANF guarantees arguments are trivial, so their values are already computed in earlier slots. The instruction dispatches to the PyTorch implementation $\hat{f}$, which equals $f$ on $\mathcal{D}_P$. Topological ordering ensures all operands are available before each operation. For loops, the iterative evaluation of the body applies the theorem inductively at each iteration.
\end{proof}

\begin{theorem}[Gradient Correctness]
\label{thm:gradients}
For any differentiable program $P$ and input $x$ in the interior of $\mathcal{D}_P$, away from nondifferentiable branch boundaries (conditionals, \texttt{abs}, \texttt{min}, \texttt{max}) and singular points of matrix operations (\texttt{det}, \texttt{inv}): $\frac{\partial M}{\partial x_i}(x) = \frac{\partial \sem{P}}{\partial x_i}(x)$.
\end{theorem}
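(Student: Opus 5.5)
The plan is to build on Theorem~\ref{thm:correctness} and argue that reverse-mode autograd applied to the instruction sequence of $M$ is just the chain rule applied to the same straight-line program that computes $\sem{P}$. I would work in the ANF representation, where $M$ is a finite sequence of assignments $s_j = \hat f_j(s_{a_1},\ldots,s_{a_k})$ in topological order, with inputs and constants as leaves. Theorem~\ref{thm:correctness} identifies the value of each slot with the value of the corresponding ANF temporary, so $M$ and $\sem{P}$ are the same composition of primitives on $\mathcal{D}_P$. Differentiating this composition by the multivariate chain rule gives $\partial\sem{P}/\partial x_i$ as a sum over paths of products of local partials $\partial f_j/\partial s_{a_\ell}$. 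Autograd computes exactly this sum by backpropagating adjoints through the same DAG in reverse topological order.

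The argument then needs three steps.

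\textbf{Step 1 (local correctness).} For each of the 51 primitives, I would check that PyTorch's registered backward for $\hat f$ equals the true derivative of $f$ at every point where $f$ is differentiable and the forward value is defined. This is a case check. Arithmetic, \texttt{sin}, \texttt{cos}, \texttt{exp}, \texttt{log}, \texttt{sqrt} and \texttt{pow} are immediate. For \texttt{det} and \texttt{inv} one needs nonsingularity, where $\partial \det A = \det(A)A^{-\top}$ and $\partial A^{-1} = -A^{-1}(\partial A)A^{-1}$ hold. \texttt{norm} and \texttt{normalize} need a nonzero argument, which I would add to the excluded singular set.

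\textbf{Step 2 (local constancy of control flow).} Restricting to the interior of $\mathcal{D}_P$ and away from branch boundaries means there is a neighborhood $U$ of $x$ on which every conditional, comparison, \texttt{abs}, \texttt{min} and \texttt{max} selects the same branch. On $U$ the program therefore agrees with a fixed branch-free straight-line program $P_U$, and comparisons and logical operations contribute piecewise-constant values with zero derivative, which matches autograd's treatment of them. For loops and recursion I would argue the same way. Since the iteration count is determined by branch tests that are constant on $U$, the unrolled trace is a fixed finite composition on $U$. The trace autograd records is exactly this unrolling, and applying the inductive argument per iteration, as in Theorem~\ref{thm:correctness}, handles the body.

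\textbf{Step 3 (conclusion).} Because $\sem{P} = \sem{P_U}$ on the open set $U$, their derivatives agree at $x$. By Step 1 and the chain rule, autograd on the trace of $P_U$ returns $\partial\sem{P_U}/\partial x_i$. Combining these gives the claim, with equality understood in the same floating-point sense as Theorem~\ref{thm:correctness}, namely exact symbolic derivative rules evaluated in floating point.

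I expect Step 2 to be the main obstacle. One difficulty is making ``away from branch boundaries'' precise when a branch test depends continuously on $x$: it requires that the condition's defining inequality be strict at $x$, which gives an open neighborhood. The other is loops whose trip count depends on $x$. I would first try to define the boundary set as the union over all executed tests of the level sets where a test changes value, and show this set is closed, so that its complement is open. The statement's hypothesis is then exactly that $x$ lies in that complement intersected with the interior of $\mathcal{D}_P$.
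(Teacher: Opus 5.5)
Your proposal follows the paper's own route: Theorem~\ref{thm:correctness} plus autograd applying the chain rule through the identical operation sequence, with the per-primitive backward check and the local constancy of branches and loop trip counts made explicit where the paper's sketch leaves them implicit. One refinement: the chain-rule step needs every primitive on the recorded trace, not merely the composite, to be differentiable at its actual argument (this includes an untaken branch if \texttt{if} is lowered to a select for batching), so the zero-argument exclusion you add for \texttt{norm} must extend to \texttt{sqrt}; for example, $\sqrt{x^4}$ at $x=0$ lies in the interior of $\mathcal{D}_P$ with true derivative $0$, yet autograd returns $\infty\cdot 0=\mathrm{NaN}$.
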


\begin{proof}[Proof sketch]
The compiled module composes standard differentiable PyTorch operations. By Theorem~\ref{thm:correctness}, the forward computation is identical to direct evaluation. PyTorch's autograd applies the chain rule through the same operation sequence, yielding identical derivatives.
\end{proof}

\begin{corollary}[Gradient Flow Through Frozen Subgraphs]
\label{cor:gradient_flow}
Let $H(x; \theta) = M(f_\theta(x))$ be a hybrid model with compiled module $M$ and trainable network $f_\theta$. Then $\frac{\partial H}{\partial \theta} = \frac{\partial M}{\partial z}\big|_{z=f_\theta(x)} \cdot \frac{\partial f_\theta}{\partial \theta}$, with both factors computed exactly by autograd. The frozen module provides exact gradients without any learned parameters.
\end{corollary}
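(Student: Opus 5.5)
The corollary is the multivariate chain rule applied to $H = M\circ f_\theta$, with the factor $\partial M/\partial z$ supplied by Theorem~\ref{thm:gradients}. I would make the hypotheses explicit: $f_\theta(x)$ is differentiable in $\theta$ (standard for an MLP with smooth activations, or almost everywhere for ReLU). Write $z = f_\theta(x)$ and assume $z$ lies in the interior of $\mathcal{D}_P$, away from the branch boundaries and matrix singularities excluded in Theorem~\ref{thm:gradients}. Under these conditions $\sem{P}$ is differentiable at $z$. Since $M = \sem{P}$ on a neighbourhood of $z$ by Theorem~\ref{thm:correctness}, $M$ is also differentiable there, with Jacobian $\partial M/\partial z = \partial\sem{P}/\partial z$.

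The steps, in order, are as follows.
\begin{enumerate}
\item Apply the chain rule for the composition $\theta \mapsto f_\theta(x) \mapsto M(f_\theta(x))$. This gives the factorization $\partial H/\partial\theta = \partial M/\partial z|_{z=f_\theta(x)}\cdot \partial f_\theta/\partial\theta$ as a product of Jacobians, with the dimensions matching: output of $M$ times $\dim z$, then $\dim z$ times $\dim\theta$.
\item Argue that reverse-mode autograd computes exactly this product. The computational graph of $H$ is the concatenation of the graph of $f_\theta$ with the instruction sequence of $M$. Backpropagation computes vector--Jacobian products node by node in reverse topological order, so the cotangent reaching $z$ is $v^\top \partial M/\partial z$, and it is then propagated through $f_\theta$.
\item For the first factor, invoke Theorem~\ref{thm:gradients}, applied with $z$ playing the role of the input $x$, to conclude that the autograd Jacobian of $M$ equals the true derivative of $\sem{P}$.
\item For the second factor, note that $f_\theta$ is built from standard PyTorch layers, whose backward rules are the exact derivatives of the implemented operations. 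This is the same reasoning as in the proof sketch of Theorem~\ref{thm:gradients}.
\end{enumerate}

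Finally, I would address the ``frozen'' claim. The DirectModule has no \texttt{nn.Parameter} entries, or has them with \texttt{requires\_grad=False}: constants are fixed tensors. So $\partial H/\partial\theta$ involves only $\theta$, no gradient accumulates into $M$, and yet cotangents still propagate through $M$'s operations. Freezing disables parameter updates, not the backward pass through the operations.

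The main obstacle is the meaning of ``exactly.'' Autograd computes the floating-point evaluation of the analytic derivative formulas, not real-number derivatives. I would therefore state exactness in the same sense as Theorems~\ref{thm:correctness} and~\ref{thm:gradients}: the computed quantity equals the chain-rule expression evaluated with the corresponding PyTorch primitive derivatives. I would also add the explicit caveat that $f_\theta(x)$ must land in the differentiable interior of $\mathcal{D}_P$; otherwise the first factor may not exist.
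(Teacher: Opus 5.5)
Your proposal is correct and takes the same route the paper intends: the paper gives no separate proof, treating the corollary as immediate from the chain rule for $H = M\circ f_\theta$ together with Theorem~\ref{thm:gradients} for the factor $\partial M/\partial z$. Two of your additions are caveats the paper leaves implicit and are worth keeping: the hypothesis that $f_\theta(x)$ must land in the differentiable interior of $\mathcal{D}_P$, and the remark that ``exact'' means the floating-point evaluation of the primitives' derivative rules.
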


\begin{theorem}[Compiled Component Exactness]
\label{thm:extrapolation}
Let $H(x; \theta) = M(\pi(x; \theta))$ be a hybrid model with compiled module $M$ and learned projection $\pi$. If training converges to $\theta^*$ such that $\pi(x; \theta^*) = \pi^*(x)$ is the true mapping, then $H(x; \theta^*) = \sem{P}(\pi^*(x))$ for all $x \in \mathcal{D}_{P \circ \pi^*}$, including inputs outside the training distribution.
\end{theorem}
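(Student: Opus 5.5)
The plan is to reduce the statement to Theorem~\ref{thm:correctness} applied pointwise. The hypothesis gives a concrete identity: at $\theta^*$ the learned projection agrees with $\pi^*$. The claim is then a substitution followed by one appeal to compilation correctness. The key observation is that Theorem~\ref{thm:correctness} is quantified over \emph{every} input in the safe domain. It makes no reference to a training distribution. So no generalization argument is needed for the compiled part.

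First, I would fix an arbitrary $x \in \mathcal{D}_{P\circ\pi^*}$, interpreting this domain as the set of $x$ for which $\pi^*(x)$ is defined and $\pi^*(x) \in \mathcal{D}_P$. I would note explicitly that $x$ need not lie in the support of the training data. Second, by definition of the hybrid model and the convergence hypothesis,
\[
H(x;\theta^*) = M(\pi(x;\theta^*)) = M(\pi^*(x)).
\]
I read the hypothesis $\pi(\cdot;\theta^*) = \pi^*$ as an equality of functions on the whole domain, not merely on training points. Third, setting $z = \pi^*(x) \in \mathcal{D}_P$, Theorem~\ref{thm:correctness} gives $M(z) = \sem{P}(z)$. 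Chaining the two equalities yields $H(x;\theta^*) = \sem{P}(\pi^*(x))$. Since $x$ was arbitrary, this holds on all of $\mathcal{D}_{P\circ\pi^*}$, out-of-distribution inputs included.

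The main obstacle is not computational but one of precise statement. The conclusion only has content if the hypothesis is read as \emph{global} agreement of $\pi(\cdot;\theta^*)$ with $\pi^*$. If it holds only on the training set, the argument shows exactness on $\pi$'s image of training inputs and nothing more off-distribution. I would therefore make this reading explicit and add a remark. In the approximate case, $\|\pi(x;\theta^*) - \pi^*(x)\| \le \varepsilon(x)$, the same substitution together with a local Lipschitz constant $L_P$ of $\sem{P}$ on $\mathcal{D}_P$ bounds the error by $L_P\,\varepsilon(x)$. This makes precise that all error originates in $\pi$, while $M$ contributes none, again by Theorem~\ref{thm:correctness}. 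A secondary point is that equality is in the floating-point sense inherited from Theorem~\ref{thm:correctness}. I would state the result relative to the same floating-point semantics $\sem{\cdot}$ used there, rather than claiming exactness over $\mathbb{R}$.
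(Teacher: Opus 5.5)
Your proposal is correct and matches the paper's proof sketch: fix $x$ with $\pi^*(x)\in\mathcal{D}_P$, use the hypothesis to replace $\pi(x;\theta^*)$ by $\pi^*(x)$, then apply Theorem~\ref{thm:correctness} at $z=\pi^*(x)$. Your explicit reading of the hypothesis as agreement of functions on the whole domain is a clarification the paper leaves implicit. The approximate-case remark would also need $\pi(x;\theta^*)\in\mathcal{D}_P$, but it is not needed for the statement.
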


\begin{proof}[Proof sketch]
By Theorem~\ref{thm:correctness}, $M(z) = \sem{P}(z)$ for all $z \in \mathcal{D}_P$. Substituting $z = \pi^*(x)$ gives the result. The compiled component contributes zero approximation error; all error comes from the learned projection.
\end{proof}

\begin{proposition}[Composition Error Bound]
\label{prop:composition}
Let $M_1, \ldots, M_k$ be compiled (exact) modules and $\hat{M}_1, \ldots, \hat{M}_k$ be neural approximations with per-module error $\epsilon_i = \sup_z |M_i(z) - \hat{M}_i(z)|$ and downstream Lipschitz constants $L_j$. The composition error satisfies:
$$|M_k \circ \cdots \circ M_1(x) - \hat{M}_k \circ \cdots \circ \hat{M}_1(x)| \leq \sum_{i=1}^{k} \epsilon_i \prod_{j=i+1}^{k} L_j$$
For compiled modules, $\epsilon_i = 0$ for all $i$, yielding zero composition error. For neural approximations of polynomial modules (degree $d$), the error grows as $O(\epsilon \cdot L^{k-1})$, growing exponentially in chain depth.
\end{proposition}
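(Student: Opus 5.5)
The plan is a telescoping argument combined with induction on the chain depth $k$. I write $F_i = M_i \circ \cdots \circ M_1$ for the exact partial compositions and $\hat F_i = \hat M_i \circ \cdots \circ \hat M_1$ for the approximate ones, with $F_0 = \hat F_0 = \mathrm{id}$. Set $e_i = |F_i(x) - \hat F_i(x)|$. The Lipschitz constant $L_j$ is read as a Lipschitz constant of the \emph{exact} module $M_j$ on a region containing both trajectories $F_{j-1}(x)$ and $\hat F_{j-1}(x)$. This is the interpretation under which ``downstream Lipschitz constants'' makes the bound true, and I would state it explicitly at the start, together with the assumption that these points lie in each $\mathcal{D}_{P_j}$.

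The key step is the one-step recursion. Adding and subtracting $M_i(\hat F_{i-1}(x))$ gives
\begin{align*}
e_i &\le |M_i(F_{i-1}(x)) - M_i(\hat F_{i-1}(x))| + |M_i(\hat F_{i-1}(x)) - \hat M_i(\hat F_{i-1}(x))| \\
&\le L_i\, e_{i-1} + \epsilon_i .
\end{align*}
The first term is bounded by the Lipschitz property of $M_i$. The second is bounded by the definition of $\epsilon_i$ as a supremum over all $z$, which covers the fact that the evaluation point $\hat F_{i-1}(x)$ is itself perturbed.

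Unrolling from $e_0 = 0$ then gives $e_k \le \sum_{i=1}^k \epsilon_i \prod_{j=i+1}^k L_j$. This is a routine induction: assume the formula for $e_{i-1}$, multiply it by $L_i$, and add $\epsilon_i$.

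For compiled modules, Theorem~\ref{thm:correctness} gives $\hat M_i = M_i$ on the safe domain, so every $\epsilon_i = 0$ and the bound vanishes. The chain then stays inside $\mathcal{D}$ by the hypothesis that the composition is evaluated on its safe domain, and Theorem~\ref{thm:correctness} applied to each stage yields exact equality, not just a bound.

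For the polynomial remark, I would take all $\epsilon_i \le \epsilon$ and all $L_j \le L$ with $L>1$. The sum is then a geometric series, $\epsilon (L^k - 1)/(L-1) = O(\epsilon L^{k-1})$. The degree $d$ enters only through $L$: a degree-$d$ polynomial restricted to a bounded set $[-R,R]$ has $L \lesssim d R^{d-1}$. So the restriction to a bounded domain is what makes $L$ finite.

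The main obstacle is that polynomials are not globally Lipschitz. Justifying the $L_j$ therefore requires a bounded region that contains both the exact and the perturbed trajectories. I would handle this by assuming, as part of the interpretation of ``downstream Lipschitz constants,'' that $L_j$ is taken over a compact set containing all intermediate values of both chains. I would note that the bound is conditional on such a set existing, for instance when the $\epsilon_i$ are small enough that the perturbed trajectory stays within a fixed neighborhood of the exact one.
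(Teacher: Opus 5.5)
Your proof is correct. The paper states Proposition~\ref{prop:composition} without any proof. Its only justification is the remark in Section~\ref{sec:composition} that per-module errors are amplified by the Lipschitz constants of downstream modules, and your recursion $e_i \le L_i e_{i-1} + \epsilon_i$ with $e_0 = 0$ is the natural formalization of that remark. You also make explicit three things the statement leaves implicit: which module $L_j$ belongs to, on which region it is taken, and that the perturbed trajectory must stay in the safe domain. That is a genuine improvement over the statement as written.

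Some refinements follow.

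\textbf{An alternative split.} Your split is not the only one that yields the bound. Adding and subtracting $\hat M_i(F_{i-1}(x))$ instead gives $e_i \le |M_i(F_{i-1}(x)) - \hat M_i(F_{i-1}(x))| + \hat L_i\, e_{i-1}$. This is the same formula, with $L_j$ now the Lipschitz constants of the networks $\hat M_j$. That route removes what you call the main obstacle:
\begin{itemize}
\item tanh and ReLU MLPs are globally Lipschitz;
\item the exact modules are evaluated only on the exact trajectory, so nothing needs to be assumed about $\hat F_{i-1}(x)$ staying in $\mathcal{D}_{P_i}$;
\item $\epsilon_i$ only needs to bound the error on the image of the exact chain.
\end{itemize}
What it loses is the link between $L$ and the degree $d$ of the exact polynomial in the proposition's last sentence, which your reading captures directly.

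\textbf{Localizing $\epsilon_i$.} In your version you localize the $L_j$ to a compact set but leave $\epsilon_i$ as a global supremum. For the bounded-output tanh networks of Experiment~6, $\sup_{z\in\mathbb{R}}|M_i(z)-\hat M_i(z)|=\infty$ whenever $M_i$ is a nonconstant polynomial. With the global supremum the bound is therefore true but vacuous. Since your recursion only uses the error at $z=\hat F_{i-1}(x)$, you should take that supremum over the same compact set as well.

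\textbf{Uniformity of $L$.} The asymptotic claim $O(\epsilon L^{k-1})$ needs a single $L$ independent of $k$. For polynomial chains this holds only if the intermediate values stay in a fixed bounded set. For repeated squaring started at $|x|>1$, the local Lipschitz constants grow with depth and the exact sensitivity $2^k|x|^{2^k-1}$ is doubly exponential. The computation in Appendix~\ref{app:gradients} implicitly shows this. So the ``exponential in depth'' conclusion is conditional on that uniformity assumption rather than automatic.
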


\begin{proposition}[Parameter Efficiency]
\label{prop:param_efficiency}
A compiled hybrid model with $m$ symbolic constants has exactly $m$ trainable parameters. An MLP with hidden width $h$ and $L$ layers requires $O(nh + Lh^2)$ parameters for $n$-dimensional input. For the Feynman benchmark: $m \in \{1, 2, 3\}$ vs.\ $O(12{,}700)$ MLP parameters, a ratio of $4{,}200\times$ to $12{,}700\times$.
\end{proposition}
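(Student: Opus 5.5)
The statement has two halves, and I would prove them separately. The first half is the claim that a compiled hybrid with $m$ symbolic constants has exactly $m$ trainable parameters. This follows from the construction of the \texttt{DirectModule} in Section~\ref{sec:pipeline}. Every slot is one of three things:
\begin{itemize}[leftmargin=2em,topsep=2pt,itemsep=1pt]
\item a constant, stored as a fixed (non-\texttt{nn.Parameter}) tensor;
\item an input, set at evaluation time;
\item the output of a primitive dispatched to a parameter-free PyTorch function.
\end{itemize}
The compiled module is frozen, so it contributes zero parameters. In the hybrid pattern "known structure with unknown parameters", each of the $m$ symbolic constants is exposed as an input slot and bound to one scalar \texttt{nn.Parameter}. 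The count is therefore exactly $m$. If a constant appears in several places, it still counts once: the ANF let-binding ties every use to the same slot and hence the same parameter. Corollary~\ref{cor:gradient_flow} shows that these $m$ scalars do receive gradients, so each one is genuinely trainable rather than merely nominal.

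The second half is the MLP count, which is a direct sum over layers. Take an MLP with input dimension $n$, $L$ hidden layers of width $h$, and scalar output. The parameters are:
\begin{itemize}[leftmargin=2em,topsep=2pt,itemsep=1pt]
\item first layer: $nh + h$;
\item each of the $L-1$ hidden-to-hidden layers: $h^2 + h$;
\item output layer: $h + 1$.
\end{itemize}
The total is $nh + (L-1)h^2 + (L+1)h + 1$, which is $\Theta(nh + Lh^2)$. The upper bound gives the stated $O(\cdot)$ form.

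The final sentence is a numerical instantiation, and I expect it to be the main obstacle. It is not a derivation but a check against the concrete baseline configuration used in the Feynman experiment, which is described later in the paper. I would first fix the architecture, for example $h=64$ with three hidden layers and $n \le 5$. I would then evaluate the exact formula above and confirm it gives roughly $12{,}700$. If the stated total does not match this formula, I would read the ratios as referring to the actual baseline parameter count rather than the asymptotic expression.

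The ratio bounds follow from dividing the MLP count by $m$, taking the largest $m$ for the low end and the smallest $m$ for the high end:
\[
\frac{12{,}700}{3} \approx 4{,}233 \approx 4{,}200\times, \qquad \frac{12{,}700}{1} = 12{,}700\times .
\]
The delicate point is that the MLP parameter count varies slightly with $n$ across the 15 equations. I would state the bound using the per-equation maximum and minimum, or a representative count, to justify the approximate figures.
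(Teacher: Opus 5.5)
\textbf{Overall.} Your proposal is sound in structure. The paper states this proposition without any proof, since it is essentially an accounting claim. Your route is therefore the natural one: a construction-based count for the compiled model, a layer-by-layer sum for the MLP, then a numerical instantiation.

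\textbf{First two parts.} Restricting the first sentence to Pattern~1 is not optional. Read literally, the sentence fails for Pattern~2 hybrids in the paper, for example the pendulum S2 model ($1{,}218$ parameters) and the heat-equation hybrid ($\sim$$2{,}100$). Also, your $\Theta(nh+Lh^2)$ holds only for $L\ge 2$: for $L=1$ the count is $nh+2h+1$. This does not matter, because only the $O(\cdot)$ bound is claimed.

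\textbf{The numerical check fails as written.} With $h=64$ and three hidden layers, your formula gives $64n+8{,}449$. For $n\le 5$ that is $8{,}513$--$8{,}769$, not roughly $12{,}700$.

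\textbf{No fallback is needed.} The Feynman setup states $12{,}673$--$12{,}865$ MLP parameters. This is exactly your formula with $L=4$ hidden layers of width $64$, namely $64n+12{,}609$ for $n=1,\dots,4$. So the Feynman baseline has four hidden layers despite its ``3-layer'' label. By contrast, the Lotka--Volterra MLP's $8{,}642$ does match three hidden layers, with $2$ inputs and $2$ outputs. Your only error was the layer count in your guessed architecture.

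\textbf{Ratio endpoints.} With the exact counts, the endpoints are per-equation quantities $N_{\mathrm{MLP}}/m$. The oscillator has $m=3$ and the single input $t$, giving $12{,}673/3\approx 4{,}224$; this confirms the $4{,}200\times$ low end. The $12{,}700\times$ high end is a rounded representative value of $N_{\mathrm{MLP}}/1$ for the one-parameter equations.
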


\subsection{Hybrid Architecture Patterns}
\label{sec:patterns}

The compiled module integrates into trainable architectures via three patterns:

\paragraph{Pattern 1: Known structure, unknown parameters.} The program encodes the full equation with symbolic constants exposed as \texttt{nn.Parameter}. Training recovers constants via gradient descent through the frozen module. Example: $F = G \cdot m_1 m_2 / r^2$ with trainable $G$.

\paragraph{Pattern 2: Known term + learned correction.} The compiled module handles one term; a trainable MLP handles the remainder. Outputs are summed. Example: $\dot\theta = \underbrace{-(g/L)\sin\theta}_\text{compiled} + \underbrace{f_\theta(\theta, \dot\theta)}_\text{MLP}$.

\paragraph{Pattern 3: Compositional pipeline.} Multiple programs are compiled into a library of modules that can be chained. Example: compiling $\text{square} \to \text{add\_one} \to \text{cube}$ as three modules with zero composition error.

\section{Experiments}
\label{sec:experiments}

We evaluate the Neural Compiler across six experiment domains, comparing against four baseline types:

\begin{itemize}[leftmargin=2em,topsep=2pt,itemsep=1pt]
    \item \textbf{Compiled}: program compiled from Scheme via the Neural Compiler (ours)
    \item \textbf{Hand-coded PyTorch}: identical physics equation written directly as \texttt{nn.Module} (no compiler)
    \item \textbf{PINN}: MLP approximates the solution; physics equation as soft loss via autograd
    \item \textbf{MLP}: standard neural network learns the mapping from data alone
    \item \textbf{Neural ODE}: MLP parameterizes the ODE right-hand side (where applicable)
\end{itemize}

The hand-coded baseline is critical: it isolates the compiler's contribution from the benefit of hard physics constraints. Any accuracy difference between compiled and hand-coded models would indicate compiler overhead; our experiments show the difference is zero to machine precision.

\paragraph{PINN implementation details.} All PINNs use a 3-layer MLP with 64 hidden units and tanh activations, trained with Adam (lr $= 10^{-3}$). The physics loss adds the PDE or ODE residual computed via autograd, weighted equally with the data loss ($\lambda_{\text{data}} = \lambda_{\text{physics}} = 1.0$). No adaptive loss weighting~\cite{wang2021understanding}, input normalization, or architecture search was applied. We acknowledge that PINNs are sensitive to these choices; our comparison reflects a standard configuration rather than an optimally tuned baseline. These results should be interpreted as evidence that standard PINN formulations can be fragile under basic settings, not as a claim that no tuned PINN variant could improve the reported numbers.

\subsection{Experiment 1: Feynman Equation Coefficient Learning}
\label{sec:feynman}

\begin{sloppypar}
We evaluate on 15 equations from the Feynman Symbolic Regression benchmark~\cite{udrescu2020ai}, spanning mechanics, thermodynamics, electromagnetism, relativity, and quantum mechanics. Unlike symbolic regression, we assume the equation form is known and compile it; the task is to recover physical constants from noisy data.
\end{sloppypar}

\paragraph{Setup.} Each equation is compiled from Scheme source with 1--3 symbolic constants as trainable parameters. The hand-coded baseline implements the identical equation as a PyTorch function with the same \texttt{nn.Parameter} entries. The MLP baseline uses a 3-layer network (64 hidden units, ReLU, 12,673--12,865 parameters). All models train for 3,000 epochs on 10,000 samples per epoch with 2\% Gaussian noise.

\begin{table}[!htbp]
\caption{Feynman equation coefficient learning. Compiled and hand-coded models (1--3 params) produce identical MSE (``C/HC'' columns). Both achieve orders-of-magnitude improvement over the MLP baseline ($\sim$12,700 params) on extrapolation.}
\label{tab:feynman}
\centering
\footnotesize
\begin{tabular}{@{}lrrrrrr@{}}
\toprule
& & \multicolumn{2}{c}{MSE (in-dist)} & \multicolumn{2}{c}{MSE ($5\times$ extrap)} & Coeff. \\
\cmidrule(lr){3-4} \cmidrule(lr){5-6}
Equation & \#P & C/HC & MLP & C/HC & MLP & err. \\
\midrule
Planck $E = hf$ & 1 & 2.3e-8 & 1.6e-3 & 2.4e-6 & 3.0e+3 & 0.003\% \\
Hooke $F = -kx$ & 1 & 2.2e-7 & 9.7e-4 & 3.5e-5 & 8.9e+2 & 0.013\% \\
Kinetic $\frac{1}{2}mv^2$ & 1 & 6.1e-7 & 5.2e-3 & 4.3e-4 & 6.6e+3 & 0.072\% \\
Gravity $Gm_1m_2/r^2$ & 1 & 8.7e-8 & 3.6e-2 & 1.2e-5 & 2.2e+4 & 0.010\% \\
Ideal gas $nRT$ & 1 & 5.3e-8 & 1.4e-3 & 1.5e-5 & 3.5e+3 & 0.002\% \\
Pendulum $k\sqrt{L/g}$ & 1 & 3.9e-8 & 1.6e-3 & 3.1e-7 & 2.9e+2 & 0.000\% \\
Heat $mc\Delta T$ & 1 & 5.3e-7 & 1.2e-3 & 3.3e-5 & 9.3e+2 & 0.008\% \\
Coulomb $k_eq_1q_2/r^2$ & 1 & 4.2e-5 & 6.2e-5 & 1.0e-2 & 1.8e-1 & 0.804\% \\
\midrule
Gaussian $\mathcal{N}(\mu,\sigma)$ & 2 & 2.9e-10 & 3.9e-4 & 1.3e-7 & 8.7e+2 & 0.003\% \\
Rel.\ energy & 2 & 5.1e-7 & 1.1e-3 & 7.3e-5 & 5.1e+2 & 0.003\% \\
Sound $\sqrt{\gamma P/\rho}$ & 2 & 9.1e-9 & 1.1e-3 & 6.2e-8 & 2.3e+2 & 0.010\% \\
Barometric & 3 & 1.2e-8 & 4.9e-3 & 4.2e-7 & 1.7e+4 & 0.058\% \\
E-field $\epsilon E^2V/2$ & 2 & 1.7e-7 & 1.2e-3 & 9.1e-5 & 4.7e+3 & 0.016\% \\
\midrule
Oscillator & 3 & 5.0e-1 & 3.9e-1 & 5.8e-1 & 4.3e-1 & $>$95\% \\
Lorentz & 1 & 7.5e-3 & 8.5e-3 & 1.6e-2 & 1.6e-2 & 19.8\% \\
\bottomrule
\end{tabular}
\end{table}

\paragraph{Results.} Table~\ref{tab:feynman} shows that compiled and hand-coded models produce numerically identical MSE across all 15 equations, confirming zero numerical discrepancy relative to hand-coded PyTorch. Of the 15 equations, 13 recover constants to $<$1\% error with 1--3 parameters, achieving a median $4{,}463\times$ improvement over the MLP baseline in-distribution and $143{,}000{,}000\times$ on $5\times$ extrapolation. Two equations fail: the harmonic oscillator (periodic loss landscape traps frequency optimization) and the Lorentz factor ($v \to c$ singularity creates flat gradients).

\paragraph{Sample efficiency.} The compiled model achieves near-zero MSE from just 10 training samples; the known equation structure means a single constant can be pinned from minimal data. The MLP requires $>$10,000 samples and still fails on extrapolation. Figure~\ref{fig:feynman} visualizes these results across error, extrapolation, coefficient recovery, parameter count, and improvement ratio.

\begin{figure}[!htbp]
\centering
\includegraphics[width=0.92\textwidth]{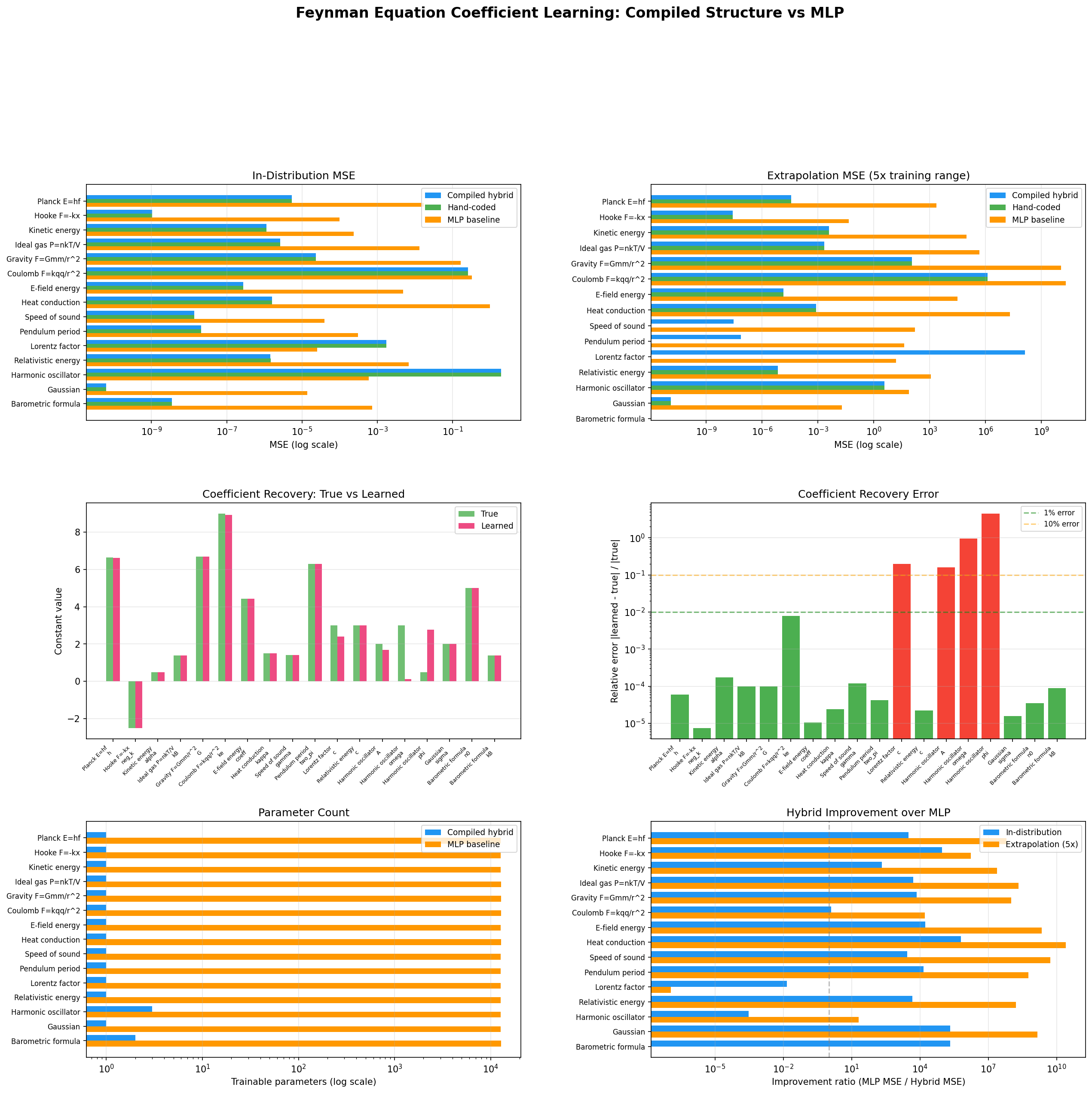}
\caption{Feynman equation coefficient learning results. \textbf{Top left}: In-distribution MSE across benchmark equations; compiled and hand-coded hybrid models usually closely match and generally outperform the MLP baseline. \textbf{Top right}: Extrapolation MSE evaluated over a $5\times$ training range, where hybrid models usually show stronger generalization than the MLP baseline. \textbf{Middle left}: True vs.\ learned coefficient values. \textbf{Middle right}: Relative coefficient recovery error on a log scale; most coefficients are recovered accurately, but several small-magnitude parameters show large relative recovery errors. \textbf{Bottom left}: Parameter count comparison on a log scale, showing that compiled hybrids use far fewer trainable parameters than the MLP baseline. \textbf{Bottom right}: Hybrid improvement ratio over the MLP for in-distribution and extrapolation settings; most equations show substantial improvement, though not uniformly across all cases.}
\label{fig:feynman}
\end{figure}

\subsection{Experiment 2: Lotka-Volterra Predator-Prey ODE}
\label{sec:lv}

The system $\dot{x} = \alpha x - \beta xy$, $\dot{y} = \delta xy - \gamma y$ models predator-prey dynamics with 4 rate constants. We train via RK4 integration with multiple shooting~\cite{bock1984multiple}.

\paragraph{Setup.} \emph{Compiled}: both RHS equations compiled from Scheme with $\alpha, \beta, \delta, \gamma$ as trainable parameters. \emph{Hand-coded}: identical equations as PyTorch functions. \emph{PINN}: MLP approximates $x(t), y(t)$ with ODE residual as soft loss. \emph{MLP}: 3-layer network (64 hidden units) learns the full RHS. All models trained for 3,000 epochs on 2\% noisy trajectory observations over $t \in [0, 12]$.

\begin{table}[!htbp]
\caption{Lotka-Volterra ODE results. Compiled and hand-coded produce identical parameter recovery ($<$1.1\% error, 4 params). The PINN (8,582 params) shows 1--8\% error and catastrophic extrapolation failure.}
\label{tab:lv}
\centering
\footnotesize
\begin{tabular}{@{}lrrrrrrr@{}}
\toprule
& & \multicolumn{3}{c}{Param.\ Recovery (\% err)} & \multicolumn{3}{c}{Trajectory MSE} \\
\cmidrule(lr){3-5} \cmidrule(lr){6-8}
Model & Params & $\alpha$ & $\beta$ & $\delta$/$\gamma$ & In-dist & $2\times$ & $5\times$ \\
\midrule
Compiled & 4 & 0.67 & 0.59 & 1.08/0.23 & 2.4e-3 & 4.9e-3 & 2.2e-2 \\
Hand-coded & 4 & 0.67 & 0.59 & 1.08/0.23 & 2.4e-3 & 4.9e-3 & 2.2e-2 \\
PINN & 8{,}582 & 7.79 & 7.21 & 2.22/1.48 & 7.8e-4 & 2.6e+0 & 4.9e+0 \\
MLP & 8{,}642 & -- & -- & -- & 4.9e-3 & 9.4e-3 & 4.2e-2 \\
Neural ODE & 8{,}642 & -- & -- & -- & 4.9e-3 & 9.5e-3 & 4.3e-2 \\
\bottomrule
\end{tabular}
\end{table}

\paragraph{Results (Table~\ref{tab:lv}).} Compiled and hand-coded models recover all four rate constants to $<$1.1\% error with 4 trainable parameters. The PINN, despite having $2{,}146\times$ more parameters, shows 1.5--7.8\% recovery error and catastrophic extrapolation failure ($5\times$ MSE $= 4.9$ vs.\ $0.022$). The MLP and Neural ODE perform similarly to the compiled model in-distribution (Lotka-Volterra is polynomial, well within MLP approximation capacity) but neither provides interpretable parameter recovery.

\paragraph{Finding.} The compiled advantage for polynomial ODEs is \emph{interpretability}, not accuracy: 4 physically meaningful parameters vs.\ 8,642 opaque weights. The accuracy advantage emerges for transcendental dynamics (Section~\ref{sec:pendulum}). Figure~\ref{fig:lv} shows the trajectory fits, extrapolation behavior, loss curves, parameter recovery, and ranked long-horizon errors; Figure~\ref{fig:lv_phase} shows the corresponding phase-space trajectories under $5\times$ extrapolation.

\begin{figure}[!htbp]
\centering
\includegraphics[width=\textwidth]{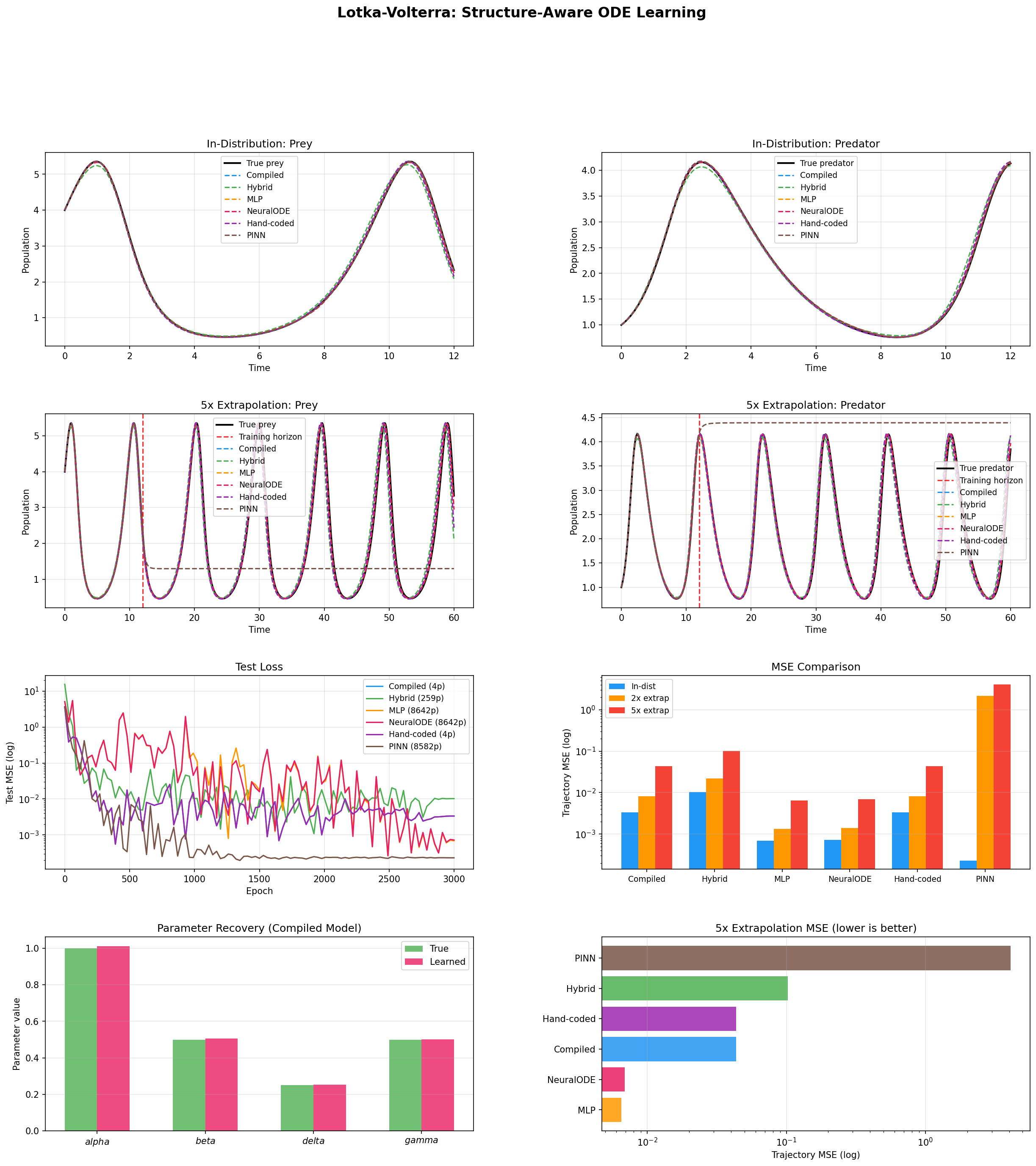}
\caption{Lotka--Volterra predator-prey ODE. \textbf{Row~1}: In-distribution trajectory fit for prey (left) and predator (right) with 2\% observation noise. \textbf{Row~2}: $5\times$ extrapolation beyond the training horizon (red dashed line); the PINN fails to preserve oscillatory dynamics and collapses to an incorrect nearly constant trajectory. \textbf{Row~3, left}: Test loss curves with parameter counts in parentheses. \textbf{Row~3, right}: Trajectory MSE across in-distribution, $2\times$, and $5\times$ extrapolation; purely learned baselines can achieve competitive or lower trajectory MSE despite lacking interpretable parameters. \textbf{Row~4, left}: Parameter recovery for the compiled known-structure model, showing true vs.\ learned values of $\alpha$, $\beta$, $\delta$, $\gamma$. \textbf{Row~4, right}: Ranked $5\times$ extrapolation MSE across all models. The PINN achieves low test loss but poor long-horizon extrapolation, suggesting that it fits the training-window objective without learning the underlying ODE structure.}
\label{fig:lv}
\end{figure}

\begin{figure}[!htbp]
\centering
\includegraphics[width=\textwidth]{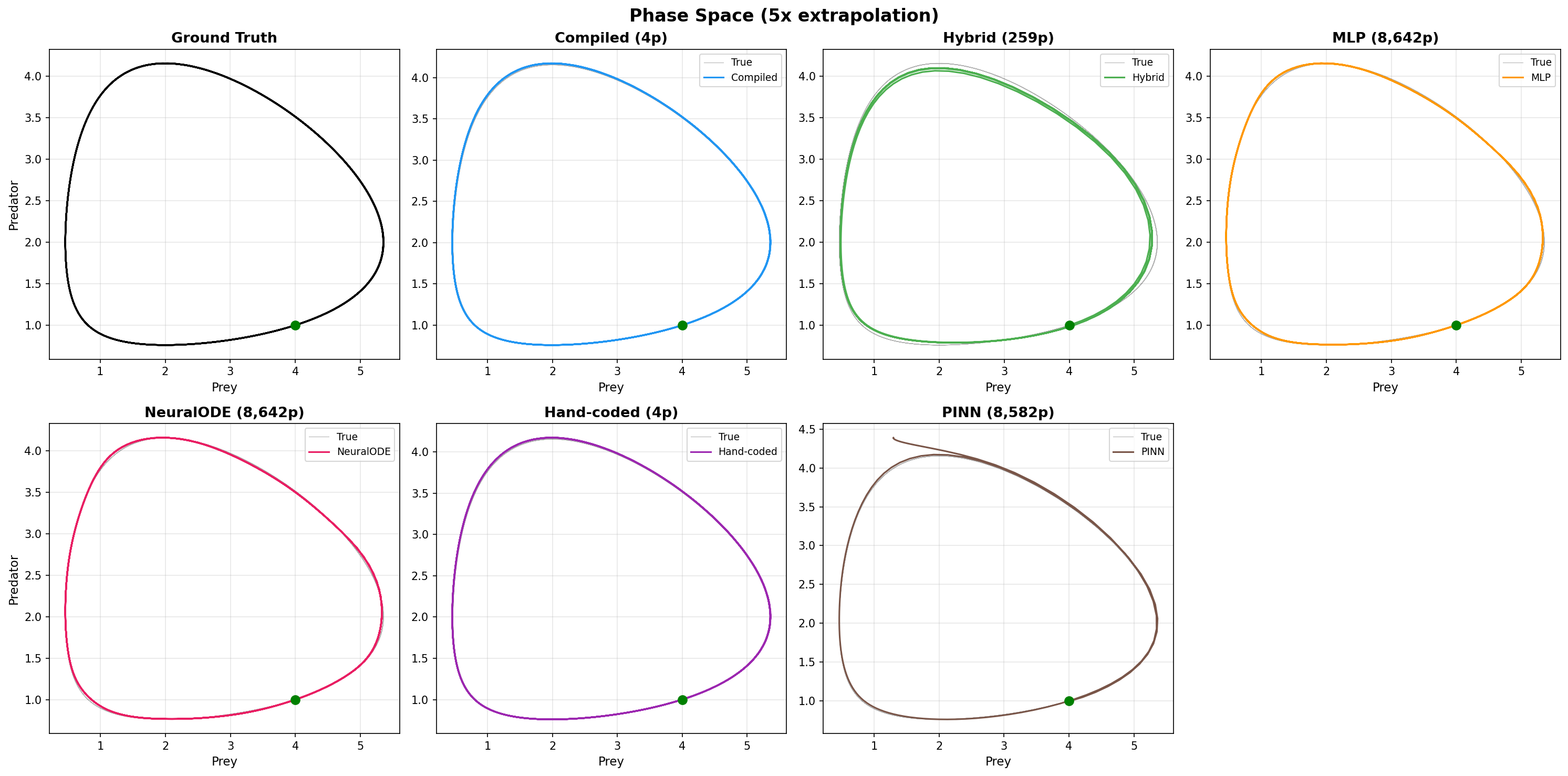}
\caption{Lotka-Volterra phase-space trajectories under $5\times$ extrapolation. Ground truth, compiled known-structure model (4 parameters), hybrid model (259 parameters), MLP (8,642 parameters), Neural ODE (8,642 parameters), hand-coded model (4 parameters), and PINN (8,582 parameters). The compiled known-structure, hybrid, and hand-coded models preserve the qualitative limit-cycle structure, while the purely learned baselines show increasing distortion or phase-space mismatch under extrapolation.}
\label{fig:lv_phase}
\end{figure}

\subsection{Experiment 3: Damped Pendulum ODE}
\label{sec:pendulum}

The system $\ddot{\theta} = -(g/L)\sin\theta - b\dot{\theta} + F(t)$ involves a transcendental ($\sin$) that MLPs approximate poorly.

\paragraph{Setup.} Same training protocol as Lotka-Volterra. The unforced variant ($F = 0$) enables all five baselines.

\begin{table}[!htbp]
\caption{Damped pendulum ODE results (unforced). The compiled model with 2 parameters achieves $731\times$ better in-distribution MSE than the 8,706-parameter MLP. The transcendental $\sin$ operation creates an accuracy gap that additional MLP capacity cannot close. The PINN recovers $g/L$ with 41\% error; compiled/hand-coded achieve identical results.}
\label{tab:pendulum}
\centering
\small
\begin{tabular}{@{}lrrrrrr@{}}
\toprule
& & \multicolumn{2}{c}{Parameter Recovery} & \multicolumn{3}{c}{Trajectory MSE} \\
\cmidrule(lr){3-4} \cmidrule(lr){5-7}
Model & Params & $g/L$ err & $b$ err & In-dist & $2\times$ & $5\times$ \\
\midrule
Compiled (S1) & 2 & 0.08\% & 0.17\% & 1.5e-5 & 4.0e-6 & 5.0e-6 \\
Hand-coded & 2 & 0.08\% & 0.17\% & 1.5e-5 & 4.0e-6 & 5.0e-6 \\
PINN & 8{,}580 & 41.1\% & 16.1\% & 1.9e-1 & 1.2e-1 & 5.3e-2 \\
Compiled Hybrid (S2) & 1{,}218 & 47.4\% & -- & 1.2e-3 & 7.7e-3 & 5.0e-3 \\
MLP & 8{,}706 & -- & -- & 2.2e-2 & 2.1e-2 & 1.5e-2 \\
Neural ODE & 8{,}706 & -- & -- & 2.2e-2 & 2.1e-2 & 1.5e-2 \\
\bottomrule
\end{tabular}
\end{table}

\paragraph{Results (Table~\ref{tab:pendulum}).} The full-structure compiled model (S1) recovers $g/L$ to 0.08\% error and achieves $731\times$ better in-distribution MSE than the MLP with $4{,}353\times$ fewer parameters. The PINN recovers $g/L$ with 41\% error; soft constraints are insufficient for the nonlinear $\sin$ dynamics.

\paragraph{Transcendental advantage.} The compiled $\sin$ operation provides an inductive bias that no amount of MLP capacity can match: ReLU networks approximate $\sin$ poorly outside their training domain, while the compiled module computes it exactly everywhere.

\paragraph{Credit assignment in hybrid models.} In Scenario 2 (compiled gravity + learned damping), $g/L$ converges to 5.16 instead of 9.81 (47\% error). The MLP absorbs part of the gravity term, creating a credit assignment problem when compiled and learned components have overlapping functional forms. This is an inherent limitation of additive hybrid architectures, not specific to our compiler. Figure~\ref{fig:pendulum} illustrates this setting, including the credit-assignment failure in the learned $g/L$ parameter.

\begin{figure}[!htbp]
\centering
\includegraphics[width=\textwidth]{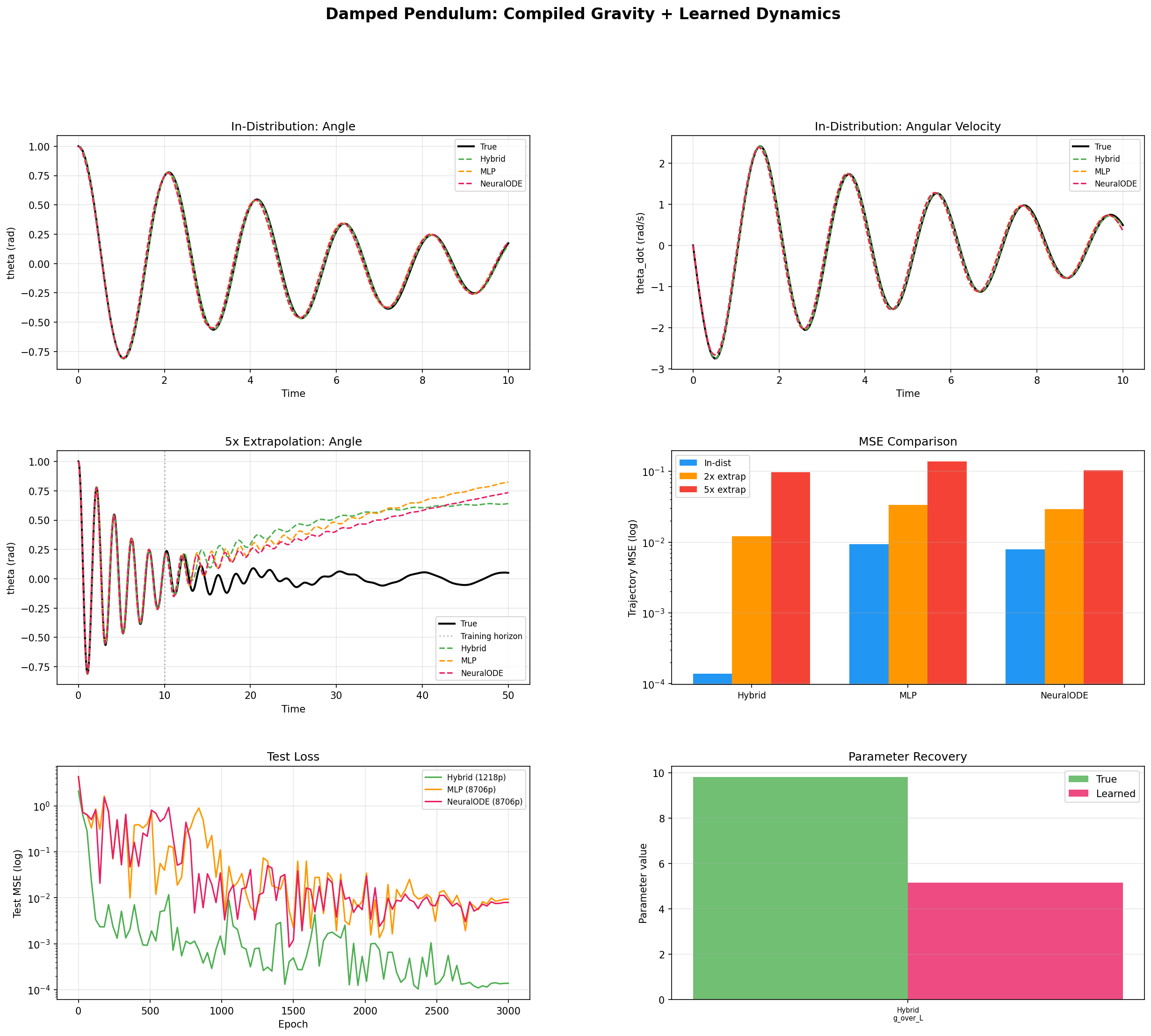}
\caption{Damped pendulum ODE, Scenario 2: compiled gravitational structure with learned dynamics. \textbf{Top left}: In-distribution angle trajectory fit for the hybrid, MLP, and Neural ODE models. \textbf{Top right}: In-distribution angular velocity trajectory fit. \textbf{Middle left}: $5\times$ extrapolation angle trajectories; the compiled hybrid extrapolates more stably than the MLP and Neural ODE baselines, though all learned models drift outside the training horizon. \textbf{Middle right}: Trajectory MSE comparison across in-distribution, $2\times$, and $5\times$ extrapolation for all three models. \textbf{Bottom left}: Test loss comparison: the hybrid model achieves the lowest test loss, reaching the $10^{-4}$ range, while the MLP and Neural ODE remain near $10^{-2}$. \textbf{Bottom right}: Parameter recovery for the hybrid model's learned $g/L$, compared with the true value; the learned value remains far from the true value, illustrating the credit-assignment issue discussed in the text.}
\label{fig:pendulum}
\end{figure}

\subsection{Experiment 4: 1D Heat Equation (PDE)}
\label{sec:pde}

The heat equation $u_t = \alpha \nabla^2 u$ is discretized on $N = 10$ grid points as $u_\text{new} = u + \Delta t \cdot \alpha \cdot L \cdot u$, where $L$ is the tridiagonal Laplacian matrix. This demonstrates the compiler on vector/matrix operations and PDE discretizations.

\paragraph{Setup.} True $\alpha = 0.01$. Training data: 50 initial conditions evolved for 5 time steps. Evaluation: 20 held-out conditions, tested at both the training horizon (interpolation) and $4\times$ the training horizon (extrapolation). Models:

\begin{itemize}[leftmargin=2em,topsep=2pt,itemsep=1pt]
    \item \emph{Compiled}: \texttt{(+ u (scale (* dt alpha) (matvec L u)))} with trainable $\alpha$ (1 param)
    \item \emph{Hand-coded}: \texttt{u + dt * alpha * torch.matmul(L, u)} with \texttt{nn.Parameter(alpha)} (1 param)
    \item \emph{PINN}: 3-layer MLP (64 hidden units, tanh activation) takes $(x, t) \to u$; PDE residual $u_t - \alpha u_{xx}$ computed via autograd and added as a physics loss with equal weighting ($\lambda_{\text{data}} = \lambda_{\text{physics}} = 1.0$); 1,000 collocation points per epoch (8,578 params). No input normalization or adaptive loss weighting was applied.
    \item \emph{MLP}: 3-layer network maps $u \to u_\text{next}$ (9,674 params)
\end{itemize}

\begin{table}[!htbp]
\caption{1D heat equation PDE results. Hard-constrained models recover $\alpha$ to machine precision (0.00\% error, 1 param); the PINN (8,578 params) shows 93\% error. Exp.\ 2 adds an unknown source term.}
\label{tab:pde}
\centering
\small
\begin{tabular*}{0.85\textwidth}{@{\extracolsep{\fill}}lrccc@{}}
\toprule
\multicolumn{5}{c}{\textbf{Experiment 1: Thermal Diffusivity Recovery}} \\
\midrule
Model & Params & $\alpha$ error & MSE (interp) & MSE (extrap) \\
\midrule
Compiled & 1 & 0.00\% & 3.7e-15 & 3.8e-15 \\
Hand-coded & 1 & 0.00\% & 3.7e-15 & 3.8e-15 \\
PINN & 8{,}578 & 92.7\% & 7.6e-1 & 5.5e-1 \\
MLP & 9{,}674 & -- & 2.4e-2 & 7.6e-2 \\
\midrule
\multicolumn{5}{c}{\textbf{Experiment 2: Hybrid Diffusion + Source Term}} \\
\midrule
Model & Params & $\alpha$ (learned) & \multicolumn{2}{c}{Final training loss} \\
\midrule
Compiled hybrid & $\sim$2{,}100 & 0.00958 & \multicolumn{2}{c}{1.2e-6} \\
Hand-coded hybrid & $\sim$2{,}100 & 0.00997 & \multicolumn{2}{c}{1.5e-6} \\
Pure MLP & 9{,}674 & -- & \multicolumn{2}{c}{1.4e-4} \\
\bottomrule
\end{tabular*}
\end{table}

\paragraph{Results (Table~\ref{tab:pde}).} The compiled and hand-coded models recover $\alpha$ to machine precision (0.0000\% error, MSE $\sim 10^{-15}$) with a single trainable parameter. The PINN, with $8{,}578\times$ more parameters, shows 92.7\% error on $\alpha$ recovery; the soft physics constraint allows the network to satisfy the data loss without correctly learning the diffusivity. This is the starkest demonstration of the hard vs.\ soft constraint gap.

\paragraph{Hybrid source experiment.} When data includes both diffusion and an unknown source term $s(x)$, the hybrid models (compiled or hand-coded diffusion + MLP source) achieve $100\times$ lower training loss than the pure MLP, confirming that encoding the known PDE structure helps even when part of the dynamics is unknown. Figure~\ref{fig:heat} summarizes the heat-equation experiments, including diffusivity recovery, test loss, extrapolation, and the diffusion-plus-source hybrid case.

\begin{figure}[!htbp]
\centering
\includegraphics[width=\textwidth]{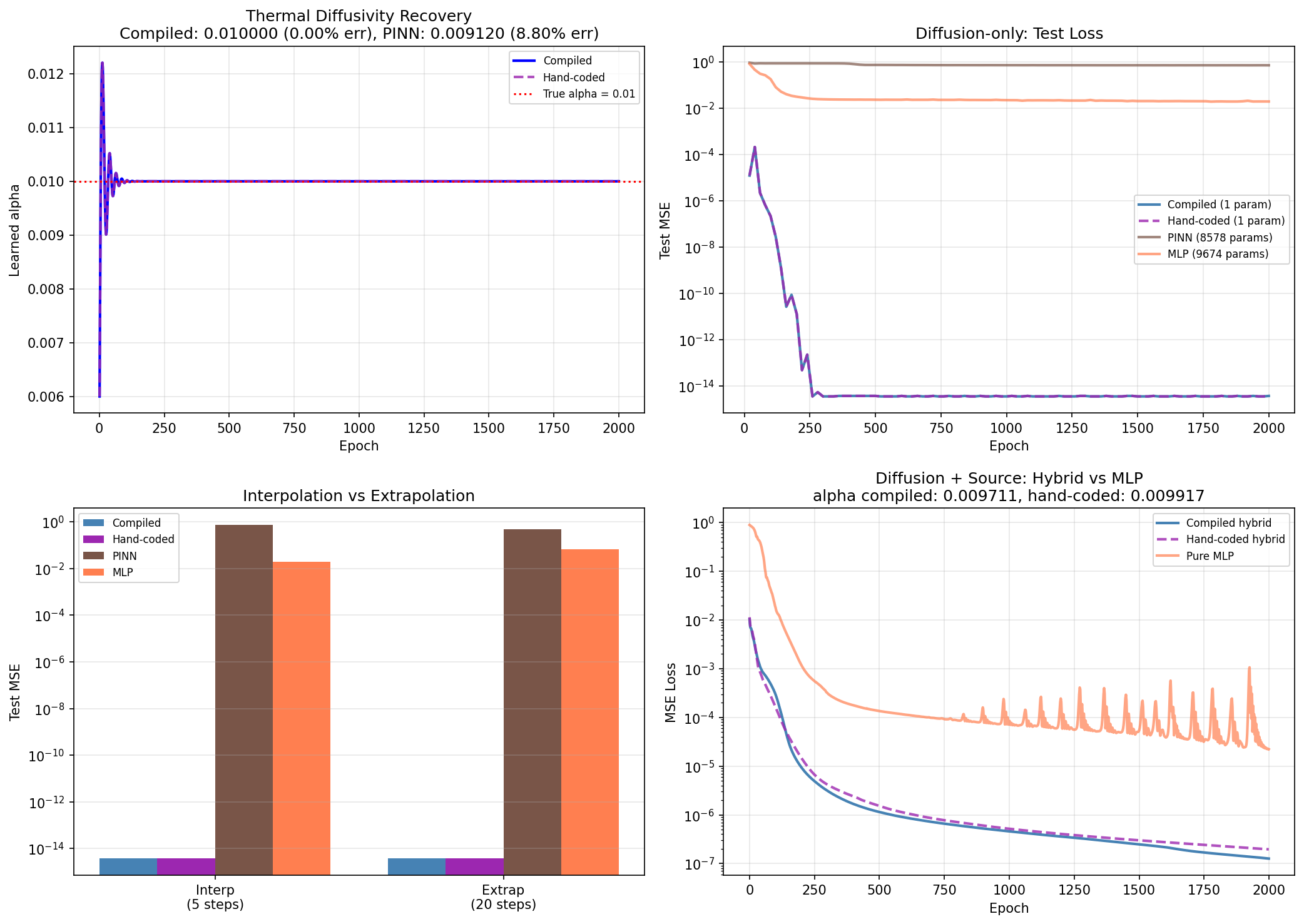}
\caption{1D heat equation PDE. \textbf{Top left}: Thermal diffusivity $\alpha$ recovery: compiled and hand-coded converge to the true value (0.01) within hundreds of epochs. \textbf{Top right}: Test loss curves for Experiment 1 (diffusion only): compiled/hand-coded (1 param) reach machine precision ($10^{-15}$); PINN (8,578 params) stalls at $10^{-1}$; MLP (9,674 params) reaches $10^{-2}$. \textbf{Bottom left}: Interpolation vs.\ extrapolation bar chart: compiled/hand-coded show no degradation under extrapolation. \textbf{Bottom right}: Experiment 2 (diffusion + source): compiled and hand-coded hybrids both outperform the pure MLP by $100\times$.}
\label{fig:heat}
\end{figure}

\subsection{Experiment 5: 3D Vector Mechanics}
\label{sec:vector}

The gravitational force $\mathbf{F} = -G m_1 m_2 / |\mathbf{r}|^3 \cdot \mathbf{r}$ is a 3D vector computation involving norm, scale, and vector arithmetic. This demonstrates the compiler's vector operation support.

\paragraph{Setup.} \emph{Compiled}: Scheme program using \texttt{norm}, \texttt{scale}, vector arithmetic with trainable $G$. \emph{Hand-coded}: same formula in PyTorch. \emph{MLP}: 3-layer network (8,899 params) maps $(m_1, m_2, \mathbf{r}) \to \mathbf{F}$. All models trained for 3,000 epochs.

\paragraph{Results.} Compiled and hand-coded produce identical results: $G$ recovered to 0.02\% error (6.675 vs.\ true 6.674), test MSE $= 1.7 \times 10^{-6}$. The MLP achieves MSE = 4.16, a $2.5 \times 10^6\times$ gap. This confirms the compiler handles vector operations correctly and that the hard physics constraint provides a strong inductive bias even in 3D. Figure~\ref{fig:vector} shows the gravitational constant recovery and gravity-plus-drag learning curves.

\begin{figure}[!htbp]
\centering
\includegraphics[width=\textwidth]{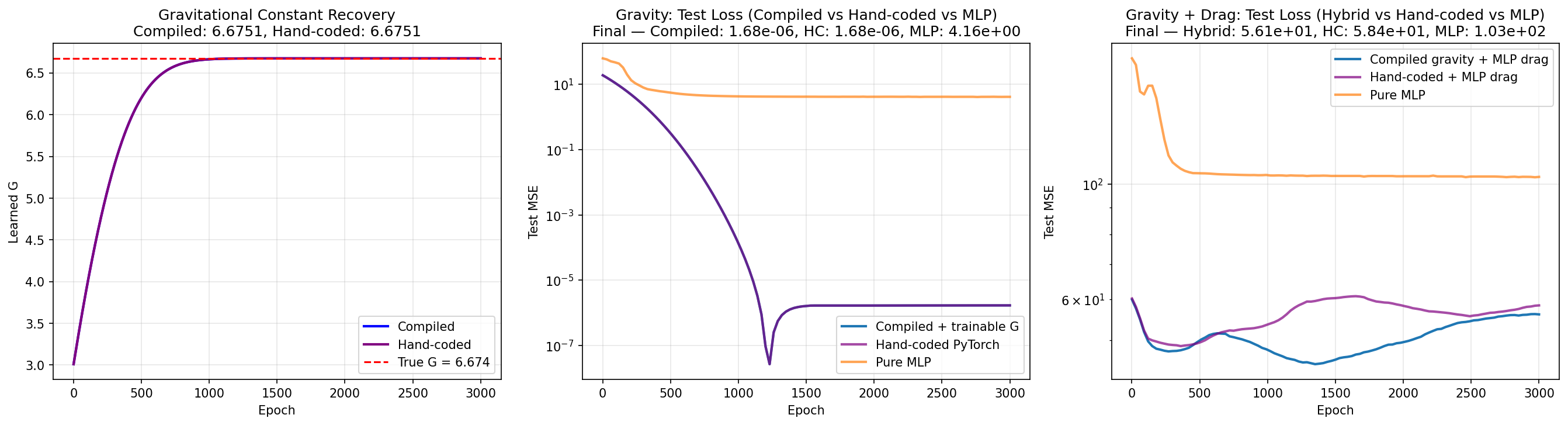}
\caption{3D vector mechanics. \textbf{Left}: Gravitational constant $G$ recovery: compiled and hand-coded converge identically to $G = 6.675$ (true $G = 6.674$, 0.02\% error). \textbf{Center}: Experiment 1 (pure gravity) test loss: compiled and hand-coded (1 param each) reach test MSE $\sim 10^{-6}$; the MLP (8,899 params) plateaus at $\sim$4, a $2.5 \times 10^6\times$ gap. \textbf{Right}: Experiment 2 (gravity + drag) test loss: compiled and hand-coded hybrids both outperform the pure MLP baseline.}
\label{fig:vector}
\end{figure}

\subsection{Experiment 6: Compositional Generalization}
\label{sec:composition}

This experiment directly tests the composition guarantee (Proposition~\ref{prop:composition}).

\paragraph{Setup.} Eight mathematical operations (square, cube, sin, exp, add\_one, negate, double, sqrt\_abs) are each compiled as frozen modules AND separately approximated by trained MLPs ($1 \to 32 \to 32 \to 1$, Tanh, 5,000 epochs on $[-2, 2]$). Additionally, each operation is implemented as a hand-coded Python function for comparison. Nine composition chains of depth 2--6 are evaluated in-distribution and at $4\times$ extrapolation.

\begin{table}[!htbp]
\caption{Compositional generalization. Compiled and hand-coded chains both achieve \textbf{exactly zero MSE} at all depths and evaluation ranges. Neural chains can accumulate large errors, especially in degree-amplifying chains (up to $5.9 \times 10^9$). The compiled vs.\ hand-coded equivalence confirms that the compiler's composition mechanism introduces no error beyond what manual function composition achieves.}
\label{tab:composition}
\centering
\small
\resizebox{\columnwidth}{!}{%
\begin{tabular}{@{}lcrrrrr@{}}
\toprule
& & \multicolumn{2}{c}{MSE (in-dist.)} & \multicolumn{2}{c}{MSE ($4\times$ extrap.)} \\
\cmidrule(lr){3-4} \cmidrule(lr){5-6}
Chain & Depth & C/HC & Neural & C/HC & Neural \\
\midrule
square $\to$ add\_one & 2 & 0 & 1.2e+0 & 0 & 7.7e+2 \\
sin $\to$ square & 2 & 0 & 3.2e-1 & 0 & 9.0e-1 \\
square $\to$ add\_one $\to$ cube & 3 & 0 & 1.4e+3 & 0 & 5.9e+9 \\
exp $\to$ negate $\to$ add\_one & 3 & 0 & 2.8e+0 & 0 & 2.8e+5 \\
sin $\to$ square $\to$ add\_one $\to$ sqrt\_abs & 4 & 0 & 3.0e-1 & 0 & 2.3e-1 \\
square $\to$ double $\to$ sin $\to$ add\_one & 4 & 0 & 1.3e+0 & 0 & 1.7e+0 \\
sin $\to$ exp $\to$ negate $\to$ add\_one $\to$ square & 5 & 0 & 9.3e-1 & 0 & 1.1e+0 \\
square $\to$ add\_one $\to$ cube $\to$ negate $\to$ add\_one & 5 & 0 & 1.5e+3 & 0 & 5.9e+9 \\
negate $\to$ \ldots $\to$ sin $\to$ add\_one & 6 & 0 & 1.3e+0 & 0 & 1.8e+0 \\
\bottomrule
\end{tabular}}
\end{table}

\paragraph{Results (Table~\ref{tab:composition}).} Compiled and hand-coded chains both achieve exactly zero MSE at all depths and evaluation ranges. Neural chains accumulate errors ranging from $10^{-1}$ to $5.9 \times 10^9$ at $4\times$ extrapolation. The exponential blowup in polynomial-amplifying chains (square $\to$ add\_one $\to$ cube) confirms Proposition~\ref{prop:composition}: per-module errors are amplified by the Lipschitz constants of downstream modules.

This experiment crystallizes the compiler's core value: for \emph{individual} operations, compiled and hand-coded are identical. But for \emph{composition}, the compiler provides a systematic mechanism to chain modules from symbolic specifications, while hand-coding requires manual reimplementation of each new chain. Figure~\ref{fig:composition} visualizes the error amplification across chain depths and evaluation ranges.

\begin{figure}[!htbp]
\centering
\includegraphics[width=\textwidth]{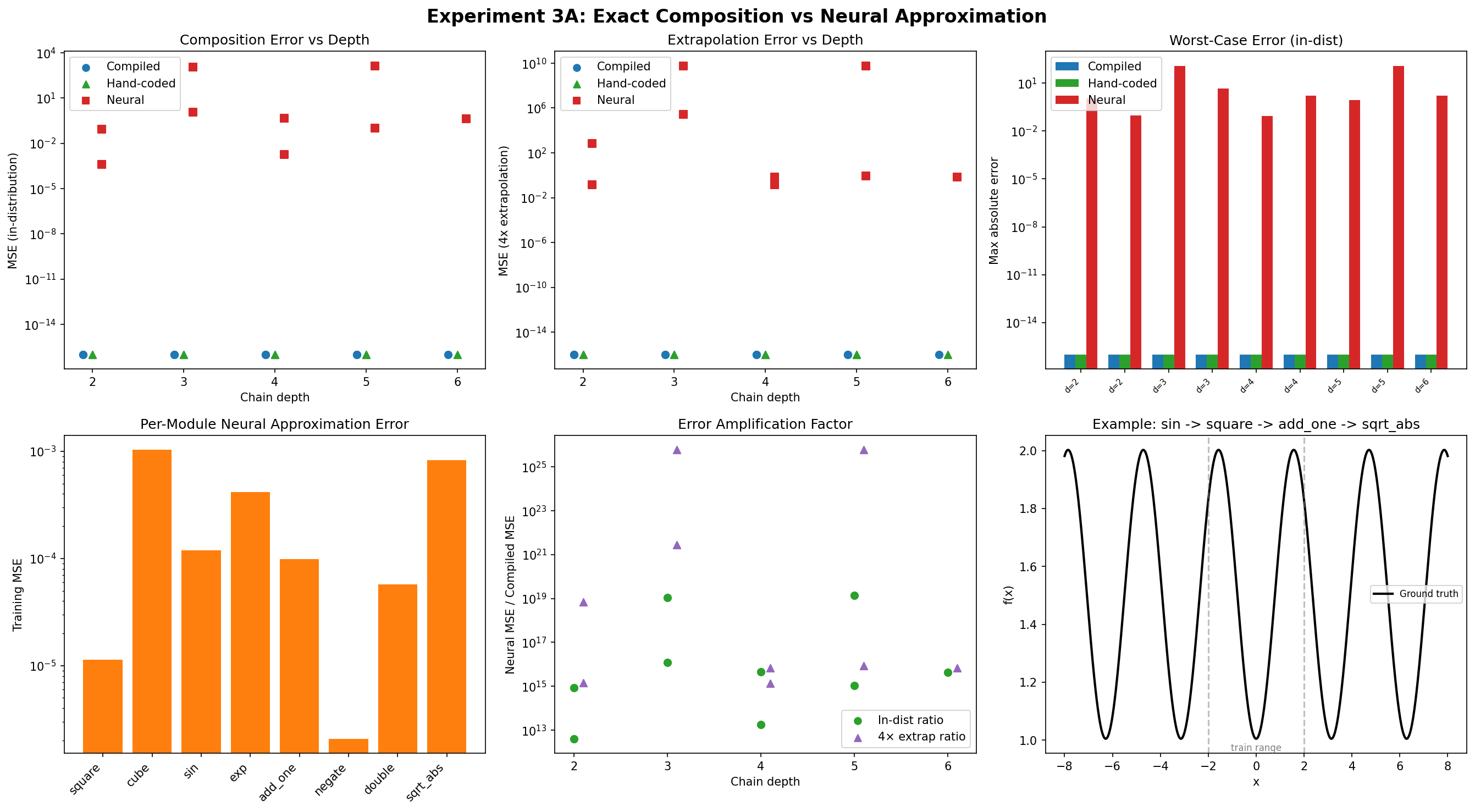}
\caption{Compositional generalization. \textbf{Top left}: In-distribution composition error vs.\ chain depth: compiled and hand-coded chains maintain zero error at all depths while neural chains accumulate errors. \textbf{Top center}: Extrapolation ($4\times$ range) error vs.\ depth: neural error amplifies dramatically with composition depth, particularly under extrapolation. \textbf{Top right}: Worst-case absolute error (in-distribution) per chain: compiled/hand-coded remain at machine epsilon. \textbf{Bottom left}: Per-module neural approximation error (training loss for each of the 8 operations). \textbf{Bottom center}: Error amplification factor (neural MSE / compiled MSE) for in-distribution and extrapolation. \textbf{Bottom right}: Example chain (sin $\to$ square $\to$ add\_one $\to$ sqrt\_abs) showing the function computed by the composition.}
\label{fig:composition}
\end{figure}

\section{Computational Cost}
\label{sec:cost}

We benchmark on an NVIDIA RTX 4090 (n128 node, University of Idaho HPC cluster). The \texttt{DirectModule} backend executes compiled instruction sequences directly, avoiding the overhead of graph framework dispatch.

\begin{table}[!htbp]
\caption{Computational cost summary. Compilation is a one-time cost under 150$\mu$s. The DirectModule adds negligible overhead compared to hand-coded PyTorch at batch size 1, and at batch size 10K the per-sample cost drops to nanoseconds. Training overhead for hybrid models is 2.5--3.8$\times$ vs.\ a pure MLP, modest relative to the accuracy gains.}
\label{tab:cost}
\centering
\small
\begin{tabular}{@{}lrr@{}}
\toprule
Metric & Value & Notes \\
\midrule
\multicolumn{3}{l}{\textbf{Compilation}} \\
\quad Compile time (all programs) & 30--150 $\mu$s & One-time cost \\
\quad Model initialization & 28--58 $\mu$s & One-time cost \\
\midrule
\multicolumn{3}{l}{\textbf{Single evaluation (batch=1)}} \\
\quad DirectModule CPU & 12--95 $\mu$s & Comparable to sequential eval \\
\quad Sequential (Python loop) & 16--78 $\mu$s & Interpreter overhead \\
\midrule
\multicolumn{3}{l}{\textbf{Batch throughput (batch=10K, CPU)}} \\
\quad Simple programs (3--7 nodes) & 585M--2.0B samples/s & Amortized fixed cost \\
\quad Complex programs (9--15 nodes) & 100M--460M samples/s & \\
\midrule
\multicolumn{3}{l}{\textbf{GPU throughput (batch=10K, RTX 4090)}} \\
\quad Simple programs & 135M--273M samples/s & \\
\quad Large trees (3,999 nodes) & 9.7M samples/s & 296$\times$ vs.\ Python \\
\quad Deep chains (501 nodes) & 333M samples/s & \\
\midrule
\multicolumn{3}{l}{\textbf{Training overhead vs MLP}} \\
\quad Known structure (2--4 params) & 3.8$\times$ & Subgraph dominates \\
\quad Hybrid (compiled + MLP) & 2.5$\times$ & MLP forward/backward dominates \\
\quad Neural ODE (torchdiffeq) & 2.9$\times$ & Adaptive solver overhead \\
\bottomrule
\end{tabular}
\end{table}

\paragraph{When is the overhead justified?} The pendulum achieves $731\times$ better MSE at $3.8\times$ wall-clock cost, an accuracy-per-compute ratio of $\sim 190\times$ in favor of compilation. In the tested setting, the MLP does not reach the compiled model's accuracy, because the advantage comes from exact structural specification rather than longer training. Training cost is also a one-time expense; at inference time, the frozen module requires no optimization but still incurs the forward-evaluation cost of the compiled instruction sequence.

\paragraph{Scaling.} For large programs (3,999 nodes), GPU batch evaluation achieves $296\times$ speedup over Python, confirming that the instruction-dispatch architecture scales to complex expressions. Deep chains (501 nodes, depth 250) sustain 333M samples/s on GPU. Figure~\ref{fig:param_recovery} summarizes parameter-recovery error across the experiments with PINN baselines.

\begin{figure}[!htbp]
\centering
\begin{tikzpicture}[xscale=1.1, yscale=0.53]
    \node[font=\small\bfseries] at (5, 8.5) {Parameter Recovery Error (\%) Across Experiments};

    \fill[blue!60] (0.8,7.6) rectangle (1.3,7.9);
    \node[font=\scriptsize, right] at (1.4,7.75) {Compiled};
    \fill[purple!50] (3.5,7.6) rectangle (4.0,7.9);
    \node[font=\scriptsize, right] at (4.1,7.75) {Hand-coded};
    \fill[brown!50] (6.5,7.6) rectangle (7.0,7.9);
    \node[font=\scriptsize, right] at (7.1,7.75) {PINN};

    \draw[thick] (0,-1.5) -- (0,7.5);
    \foreach \y/\label in {-1.5/0.001, 0/0.01, 1.5/0.1, 3/1, 4.5/10, 6/100} {
        \draw (0,\y) -- (-0.1,\y) node[left, font=\scriptsize] {\label\%};
        \draw[gray!30] (0,\y) -- (10,\y);
    }

    \draw[thick] (0,-1.5) -- (10,-1.5);

    \fill[blue!60] (0.8,-1.5) rectangle (1.5,2.74);
    \fill[purple!50] (1.6,-1.5) rectangle (2.3,2.74);
    \fill[brown!50] (2.4,-1.5) rectangle (3.1,4.33);
    \node[font=\scriptsize, rotate=45, anchor=east] at (2.0,-1.7) {LV ($\alpha$)};

    \fill[blue!60] (4.0,-1.5) rectangle (4.7,1.35);
    \fill[purple!50] (4.8,-1.5) rectangle (5.5,1.35);
    \fill[brown!50] (5.6,-1.5) rectangle (6.3,5.42);
    \node[font=\scriptsize, rotate=45, anchor=east] at (5.2,-1.7) {Pendulum ($g/L$)};

    \fill[blue!60] (7.2,-1.5) rectangle (7.9,-1.45);
    \fill[purple!50] (8.0,-1.5) rectangle (8.7,-1.45);
    \fill[brown!50] (8.8,-1.5) rectangle (9.5,5.95);
    \node[font=\scriptsize] at (8.05,-0.8) {0.0\%};
    \node[font=\scriptsize, rotate=45, anchor=east] at (8.4,-1.7) {Heat ($\alpha$)};
\end{tikzpicture}
\caption{Parameter recovery error comparison for experiments with PINN baselines. Compiled and hand-coded models (hard constraints) produce identical results in every case. PINNs (soft constraints) show 7--93\% error where compiled models achieve $<$1\%. The gap is most dramatic for the heat equation (0.0\% vs.\ 92.7\%) and pendulum (0.08\% vs.\ 41.1\%). Note log scale.}
\label{fig:param_recovery}
\end{figure}

\section{The Compiler's Value: A Nuanced Assessment}
\label{sec:value}

Our experiments reveal a precise characterization of the compiler's contribution.

\paragraph{What the compiler does NOT provide.} For any single, fixed equation, the compiled module produces output numerically identical to a hand-coded PyTorch implementation (Tables~\ref{tab:feynman}--\ref{tab:pde}). If a practitioner has one equation and is willing to implement it manually, the compiler offers no accuracy benefit. This is by design: the compiler translates programs faithfully (Theorem~\ref{thm:correctness}), not approximately.

\paragraph{What the compiler DOES provide.}

\begin{enumerate}[leftmargin=2em,topsep=2pt,itemsep=3pt]
    \item \textbf{Systematic module generation.} A single function call, \texttt{compile\_scheme(source, inputs=\allowbreak\{...\})}, produces a correct, differentiable, composable \texttt{nn.Module} from any expression in the 51-operation language. Minimal manual implementation, reduced per-equation debugging, and no boilerplate.

    \item \textbf{Composability from text.} Changing the string \texttt{"(sin x)"} to \texttt{"(exp (sin x))"} produces a new correct module instantly. Hand-coding each composition requires writing, testing, and debugging new PyTorch code. For a library of $n$ modules composed in chains of depth $k$, hand-coding requires $O(n^k)$ implementations; the compiler requires $O(n)$ source strings.

    \item \textbf{Hard constraints vs.\ soft constraints.} Both compiled and hand-coded models enforce physics exactly. PINNs enforce physics as a soft loss penalty. Our experiments show this distinction matters dramatically: hard constraints achieve 0.0\% parameter recovery error where soft constraints show 7--93\% error (Tables~\ref{tab:lv}--\ref{tab:pde}). The compiler provides a systematic mechanism to generate hard-constrained modules from specifications; the PINN provides only soft constraints.

    \item \textbf{A programmatic interface for model generation.} The compiler's input is a string. This makes it amenable to programmatic generation: scripts can generate families of equations, optimizers can search over equation structures, and, most promisingly, large language models can translate natural-language physics descriptions into compilable specifications (Section~\ref{sec:future}).
\end{enumerate}

\section{Related Work}
\label{sec:related}

\paragraph{Physics-informed neural networks.} PINNs~\cite{raissi2019physics,hao2023pinnacle} encode physics as soft loss penalties via autograd. The network may violate the physics; our compiled modules enforce it exactly. Our experiments confirm this distinction: PINNs show 7--93\% parameter recovery error where compiled models achieve ${<}\,1$\% (Tables~\ref{tab:lv}--\ref{tab:pde}). PINNs also struggle with stiff systems and require careful loss weighting~\cite{wang2021understanding}.\looseness=-1

\paragraph{Universal differential equations.} Rackauckas et al.~\cite{rackauckas2020universal} combine known ODE terms with neural networks in Julia's DifferentialEquations.jl. This is closest to our hybrid architecture (Pattern 2) but requires hand-implementation of each known term in Julia. Our compiler automates this from symbolic specifications.

\paragraph{Neural ODEs.} Chen et al.~\cite{chen2018neural} parameterize the full ODE right-hand side with a neural network. This discards known structure. Our approach compiles known terms and learns only the unknown remainder, achieving $731\times$ better accuracy with $4{,}353\times$ fewer parameters on the pendulum.

\paragraph{Hamiltonian and Lagrangian neural networks.} HNN~\cite{greydanus2019hamiltonian} and LNN~\cite{cranmer2020lagrangian} enforce conservation laws via architectural constraints. These are complementary: they encode \emph{structural} knowledge (Hamiltonian/Lagrangian structure); our compiler encodes \emph{specific} known equations. The two could be combined, compiling known Hamiltonian terms while learning unknown contributions.\looseness=-1

\paragraph{Sparse identification (SINDy).} Brunton et al.~\cite{brunton2016discovering} discover ODE structure from data via sparse regression on a candidate library. SINDy discovers structure; our compiler exploits known structure. They are complementary: SINDy could identify candidate equations that the compiler then integrates into hybrid models.

\paragraph{Symbolic regression.} AI Feynman~\cite{udrescu2020ai} and PySR~\cite{cranmer2023interpretable} discover equation forms from data. When the form is known, compilation is dramatically more efficient: 1--3 parameters vs.\ thousands of function evaluations. The two approaches address different problems (discovery vs.\ exploitation).

\paragraph{Differentiable programming and symbolic code generation.} JAX~\cite{jax2018github}, Julia/Zygote~\cite{innes2019differentiable}, TorchScript, and SymPy-to-PyTorch pipelines (e.g., \texttt{lambdify}) compile or translate symbolic expressions to differentiable code. These are faster than our approach for single evaluation. Our contribution is not differentiable compilation per se, but the \emph{frozen composable module abstraction}: compiled programs become \texttt{nn.Module} objects that can be composed into chains with zero error (Section~\ref{sec:composition}), embedded in hybrid architectures with trainable components, and programmatically generated from symbolic specifications (Section~\ref{sec:future}). The key difference from ``SymPy to PyTorch'' is the systematic evaluation framework: we compare against hand-coded, PINN, MLP, and neural ODE baselines across six domains, establishing when hard constraints outperform soft constraints experimentally.

\paragraph{Neural module networks.} Andreas et al.~\cite{andreas2016neural} compose task-specific neural modules selected by a controller. Our modules are \emph{compiled} (not learned), providing exactness guarantees that learned modules cannot.

\paragraph{Neurosymbolic programming.} Scallop~\cite{li2023scallop}, DeepProbLog~\cite{manhaeve2018deepproblog}, and NeurASP~\cite{yang2020neurasp} integrate neural perception with symbolic reasoning via differentiable logic programming. These systems learn to map perceptual inputs to symbolic predicates; our compiler addresses a different problem: translating known mathematical expressions into frozen differentiable modules for embedding in hybrid scientific models.

\paragraph{Compilation to neural architectures.} Tracr~\cite{lindner2023tracr} compiles RASP programs to transformer weights. We target a different domain: exact arithmetic computation embedded in trainable scientific models, rather than interpretable transformer circuits.

\section{Future Directions: LLM-Driven Scientific Model Construction}
\label{sec:future}

The compiler's input is a text string; its output is a differentiable module. This creates a natural interface for large language models.

\paragraph{Natural-language to compiled module.} A scientist describes ``the gravitational force between two point masses'' in English. An LLM translates this to \texttt{(/ (* (- 0 G) (* m1 m2)) (pow r 2))}, which the compiler converts to a frozen, differentiable module in milliseconds. The module is correct by construction (Theorem~\ref{thm:correctness}), provides exact gradients (Theorem~\ref{thm:gradients}), and can be composed with other modules (Proposition~\ref{prop:composition}). The LLM serves as a natural-language parser; the compiler provides formal guarantees that the LLM alone cannot.

\paragraph{Self-architecting models.} Consider a model that can modify its own architecture by generating Scheme programs and compiling them into new frozen modules. Each compiled module is:
\begin{itemize}[leftmargin=2em,topsep=2pt,itemsep=1pt]
    \item \emph{Differentiable}: gradients flow through the module back to trainable components.
    \item \emph{Interpretable}: each module is a readable program, not an opaque weight matrix.
    \item \emph{Verifiable}: the module can be tested against known inputs before deployment.
    \item \emph{Non-destructive}: frozen modules cannot interfere with existing capabilities.
\end{itemize}
This is a structured alternative to uncontrolled weight editing: the model grows new exact-computation capabilities through a constrained, verifiable channel.

\paragraph{Continual learning via compilation.} The model accumulates a library of compiled modules over time, each permanently frozen and exactly correct. New modules can compose with previous ones (Section~\ref{sec:composition}). Only the routing/interfacing layer requires updating. This is a structural solution to catastrophic forgetting: compiled capabilities never degrade because they are frozen. The composition guarantee (Proposition~\ref{prop:composition}) ensures that combining old and new modules introduces no additional error.

\paragraph{Iterative hypothesis refinement.} An LLM proposes a physics hypothesis as a program; the compiler integrates it into a trainable model; gradient descent evaluates the hypothesis against data; the residual informs a new hypothesis. This compile-train-refine loop could automate the scientific modeling cycle, with the compiler ensuring that each hypothesis is tested in its exact form rather than through a lossy approximation.

\section{Limitations}
\label{sec:limitations}

\begin{enumerate}[leftmargin=2em,topsep=2pt,itemsep=3pt]
    \item \textbf{The source language is restricted.} $\mathcal{L}$ supports 51 operations including vector and matrix algebra, but excludes higher-order functions, string processing, and general data structures. This covers a broad range of scientific equations but not all computational patterns.

    \item \textbf{Credit assignment in hybrid models.} When compiled and learned components have overlapping functional forms (Section~\ref{sec:pendulum}, Scenario 2), the optimizer may distribute the computation non-uniquely. This is a general limitation of additive hybrid architectures, not specific to our compiler, but it affects parameter interpretability.

    \item \textbf{The program must be known.} The compiler exploits known structure; it does not discover it. When the equation form is unknown, symbolic regression or SINDy methods are more appropriate. The compiler is complementary to these discovery methods.

    \item \textbf{Training overhead.} Compiled hybrid models train 2.5--3.8$\times$ slower than pure MLPs due to the instruction-dispatch evaluation of compiled subgraphs. For simple expressions where a hand-coded PyTorch implementation is trivial, the compiler adds engineering convenience but not computational value.

    \item \textbf{Periodic and singular equations.} Two of 15 Feynman equations fail to recover constants: the harmonic oscillator (periodic loss landscape) and the Lorentz factor (singularity). These are optimization failures, not compilation failures; the compiled module computes correctly, but gradient descent gets trapped.
\end{enumerate}

\section{Conclusion}
\label{sec:conclusion}

We have presented the Neural Compiler, a system that translates symbolic programs into frozen, differentiable PyTorch modules for hybrid scientific machine learning. The compiler's theoretical guarantees (compilation correctness, gradient exactness, compiled-component exactness, and exact composition) are confirmed across six experiment domains spanning algebraic equations, ODEs, a PDE, vector mechanics, and compositional chains.

The key empirical finding is a clean separation of concerns: compiled and hand-coded models produce numerically identical results for single equations (confirming zero numerical discrepancy), while the gap between hard-constrained (compiled/hand-coded) and soft-constrained (PINN) approaches is dramatic: 0\% vs.\ 93\% parameter recovery error on the heat equation, 0.08\% vs.\ 41\% on the pendulum. The compiler's value is \emph{systematic composability}: generating correct modules from text specifications rather than manual implementation, providing the same exactness as hand-coded composition while making large families of compositions programmatically generable from symbolic specifications.

The system's string-in, module-out interface makes it a natural target for large language model integration, opening a path toward self-architecting scientific models that grow new exact-computation capabilities through compilation. The source code is available at \url{https://github.com/sheneman/neural_compiler}.


\bibliographystyle{unsrtnat}

\newpage
\appendix

\section{Full Feynman Equation Specifications}
\label{app:feynman}

\begin{table}[!htbp]
\caption{Feynman equations: Scheme source and compiled module statistics.}
\label{tab:feynman_full}
\centering
\footnotesize
\begin{tabular}{@{}llrrl@{}}
\toprule
Equation & Scheme Source & N & D & Trainable \\
\midrule
Planck & \texttt{(* h f)} & 3 & 1 & $h$ \\
Hooke & \texttt{(* (- 0 k) x)} & 5 & 2 & $k$ \\
KE & \texttt{(* alpha (* m (pow v 2)))} & 6 & 3 & $\alpha$ \\
Gravity & \texttt{(/ (* G (* m1 m2)) (pow r 2))} & 8 & 3 & $G$ \\
Ideal gas & \texttt{(* n (* R T))} & 5 & 2 & $R$ \\
Pendulum & \texttt{(* k (sqrt (/ L g)))} & 6 & 3 & $k$ \\
Heat & \texttt{(* m (* c dT))} & 5 & 2 & $c$ \\
Coulomb & \texttt{(/ (* ke (* q1 q2)) (pow r 2))} & 8 & 3 & $k_e$ \\
Gaussian & \texttt{(/ (exp ...) (* sigma (sqrt ...)))} & 15 & 5 & $\mu, \sigma$ \\
Rel.\ energy & \texttt{(/ (* m (pow c 2)) (sqrt ...))} & 14 & 5 & $m, c$ \\
Sound & \texttt{(sqrt (/ (* gamma P) rho))} & 7 & 3 & $\gamma$ \\
Barometric & \texttt{(* P0 (exp (/ ...)))} & 14 & 5 & $P_0, m, k_B$ \\
E-field & \texttt{(* coeff (* E (* E V)))} & 6 & 3 & $\epsilon$ \\
Oscillator & \texttt{(* A (sin (+ (* omega t) phi)))} & 8 & 4 & $A, \omega, \phi$ \\
Lorentz & \texttt{(/ 1 (sqrt (- 1 (pow ...))))} & 10 & 5 & $c$ \\
\bottomrule
\end{tabular}
\end{table}

\section{Noise Robustness (Lotka-Volterra)}
\label{app:noise}

\begin{table}[!htbp]
\caption{Parameter recovery error vs.\ observation noise (Lotka-Volterra, known structure, 3{,}000 epochs). At 0\% noise, all parameters are recovered exactly. Recovery degrades gracefully up to 10\% noise.}
\label{tab:noise}
\centering
\begin{tabular}{@{}rrrrrr@{}}
\toprule
Noise & $\alpha$ err & $\beta$ err & $\delta$ err & $\gamma$ err & Max err \\
\midrule
0\% & 0.000\% & 0.000\% & 0.000\% & 0.000\% & 0.000\% \\
1\% & 0.663\% & 0.502\% & 0.198\% & 0.014\% & 0.663\% \\
2\% & 1.166\% & 0.945\% & 0.643\% & 0.140\% & 1.166\% \\
5\% & 1.923\% & 1.612\% & 3.232\% & 1.942\% & 3.232\% \\
10\% & 0.550\% & 0.960\% & 10.846\% & 8.772\% & 10.846\% \\
\bottomrule
\end{tabular}
\end{table}

\section{Batch Throughput Scaling}
\label{app:throughput}

\begin{table}[!htbp]
\caption{Batch throughput (samples/second) on RTX 4090 for representative programs. The fixed per-call overhead is amortized across batch elements, yielding near-linear scaling.}
\label{tab:throughput}
\centering
\small
\begin{tabular}{@{}lrrrr@{}}
\toprule
Program (nodes) & Batch 1 & Batch 100 & Batch 10K & Scaling \\
\midrule
add (3) & 201K & 11.0M & 2.0B & 9{,}950$\times$ \\
square\_plus (4) & 202K & 8.2M & 585M & 2{,}900$\times$ \\
four\_ops (7) & 141K & 5.9M & 878M & 6{,}200$\times$ \\
quadratic (9) & 115K & 4.5M & 365M & 3{,}200$\times$ \\
discriminant (13) & 28K & 2.0M & 135M & 4{,}800$\times$ \\
dot4 (15) & 85K & 3.6M & 457M & 5{,}400$\times$ \\
\bottomrule
\end{tabular}
\end{table}

\section{Gradient Scaling in Deep Compiled Chains}
\label{app:gradients}

\begin{proposition}[Gradient Scaling]
For a $k$-stage pipeline of polynomial operations $G = g_k \circ \cdots \circ g_1$ where each $g_i$ has degree $d_i$, the gradient magnitude satisfies $|\partial G / \partial x| = \prod_{i=1}^{k} d_i |z_{i-1}|^{d_i-1} |c_i|$ where $z_i$ are intermediate values. For squaring operations ($d_i=2$): $|\partial G / \partial x| = 2^k \prod |z_i|$, growing exponentially when $|z_i| > 1$.
\end{proposition}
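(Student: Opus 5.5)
The plan is to read the statement as a claim about \emph{monomial} stages, $g_i(z) = c_i z^{d_i}$, since the displayed formula (a single leading coefficient $c_i$ and a single power $d_i-1$) is exactly the derivative of such a map. For a general degree-$d_i$ polynomial the derivative has lower-order terms too, and the identity becomes an asymptotic statement for large $|z_{i-1}|$. I will prove the exact identity for monomials, then remark on the general case. Throughout, write $z_0 = x$ and $z_i = g_i(z_{i-1})$.

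The first step is the chain rule for the composite $G = g_k \circ \cdots \circ g_1$. It gives
$$\frac{\partial G}{\partial x} = \prod_{i=1}^k g_i'(z_{i-1}) = \prod_{i=1}^k c_i\, d_i\, z_{i-1}^{d_i-1}.$$
Taking absolute values yields the stated magnitude formula. Since every stage is a polynomial, it is differentiable everywhere with no branch or singular points. Theorem~\ref{thm:gradients} therefore applies on all of $\mathbb{R}$, and the autograd gradient of the compiled chain equals this exact derivative. Equivalently, Corollary~\ref{cor:gradient_flow} can be applied stage by stage.

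The second step specializes to squaring, $c_i = 1$ and $d_i = 2$. Then $|\partial G/\partial x| = 2^k \prod_{i=1}^{k} |z_{i-1}|$, so the product in the statement should be read as running over the inputs $z_0, \ldots, z_{k-1}$ of each stage. I will make that indexing explicit. To quantify the growth, I will substitute $z_i = x^{2^i}$ and get
$$\prod_{i=0}^{k-1} |z_i| = |x|^{\sum_{i<k} 2^i} = |x|^{2^k-1}.$$
Hence whenever $|z_i| > 1$ (equivalently $|x| > 1$), each factor exceeds $1$ and the gradient grows at least like $2^k$. In fact it grows like $2^k |x|^{2^k-1}$, which is doubly exponential in $k$. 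Symmetrically, it vanishes super-exponentially when $|x| < 1$.

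The only real obstacle is interpretive rather than computational. The statement's ``degree $d_i$'' must be pinned to monomials $c_i z^{d_i}$, and its product index must be aligned with the stage inputs $z_{i-1}$. Once those two conventions are fixed, the rest is a direct application of the chain rule together with Theorem~\ref{thm:gradients}. For non-monomial polynomials, I would add a remark: $|g_i'(z)| \sim d_i |c_i| |z|^{d_i-1}$ as $|z| \to \infty$, so the same exponential-growth conclusion holds asymptotically.
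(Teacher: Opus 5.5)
Your proposal is correct and follows the same route as the paper. The paper gives no separate proof: the formula is simply the chain rule applied to monomial stages $c_i z^{d_i}$, and the paper adds only that the compiled module reproduces this derivative exactly (citing Theorem~\ref{thm:correctness}, where you cite Theorem~\ref{thm:gradients}). Your two interpretive fixes are what make the displayed identities literally true: restricting to monomials, and indexing the squaring product over the stage inputs $z_0,\ldots,z_{k-1}$, which gives $2^k|x|^{2^k-1}$.
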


This gradient scaling is mathematically exact; the compiled module faithfully reproduces it because the computation is exact (Theorem~\ref{thm:correctness}). Residual connections at module interfaces ($z_i + g_i(z_i)$) bypass the multiplicative chain, providing an alternative gradient pathway that resolves gradient traps. In our experiments, residual connections achieve 100\% convergence across random seeds for deep compiled chains.

\section{Vector and Matrix Operations}
\label{app:vector}

Table~\ref{tab:vector_ops} lists the vector and matrix operations added in v0.8.0. All operations use \texttt{dim=-1} / \texttt{dim=-2} conventions for batch compatibility.

\begin{table}[!htbp]
\caption{Vector and matrix operations (selected). All use \texttt{dim=-1}/\texttt{dim=-2} for batch compatibility.}
\label{tab:vector_ops}
\centering
\footnotesize
\begin{tabular}{@{}lllll@{}}
\toprule
Op & Type & Implementation & In & Out \\
\midrule
\texttt{vec} & construct & \texttt{torch.stack(args, dim=-1)} & $[*]^n$ & $[*, n]$ \\
\texttt{dot} & reduce & \texttt{(a*b).sum(dim=-1)} & $[*, n]^2$ & $[*]$ \\
\texttt{cross} & binary & \texttt{torch.linalg.cross} & $[*, 3]^2$ & $[*, 3]$ \\
\texttt{norm} & reduce & \texttt{torch.norm(v, dim=-1)} & $[*, n]$ & $[*]$ \\
\texttt{matvec} & binary & \texttt{torch.matmul(M, v)} & $[*, n, m] \times [*, m]$ & $[*, n]$ \\
\texttt{matmul} & binary & \texttt{torch.matmul(A, B)} & $[*, n, k] \times [*, k, m]$ & $[*, n, m]$ \\
\texttt{det} & reduce & \texttt{torch.linalg.det} & $[*, n, n]$ & $[*]$ \\
\texttt{inv} & unary & \texttt{torch.linalg.inv} & $[*, n, n]$ & $[*, n, n]$ \\
\texttt{outer} & binary & \texttt{a[...,None]*b[...,None,:]} & $[*, n] \times [*, m]$ & $[*, n, m]$ \\
\bottomrule
\end{tabular}
\end{table}

\end{document}